\documentclass{article}

\usepackage[nonatbib,preprint]{neurips_2025}

\usepackage{multirow}
\usepackage{array}
\usepackage{pifont}
\usepackage{arydshln}
\usepackage{tcolorbox}
\usepackage{wrapfig}
\usepackage[utf8]{inputenc} 
\usepackage{needspace}
\usepackage{float}
\usepackage{enumitem}
\usepackage{lipsum}
\usepackage[caption = true]{subfig}
\usepackage[T1]{fontenc}
\usepackage{hyperref}       
\usepackage{url}            
\usepackage{booktabs}       
\usepackage{amsfonts}       
\usepackage{nicefrac}       
\usepackage{microtype}      
\usepackage{graphicx}
\usepackage{amsmath}
\usepackage{cleveref}
\usepackage{amsthm}
\usepackage{verbatim}
\usepackage{listings}
\usepackage{todonotes}

\usepackage{amssymb}
\hypersetup{colorlinks,linkcolor={red},citecolor={blue},urlcolor={red}} 
\usepackage{amsthm}

\newtheorem{theorem}{Theorem}[]
\newtheorem{corollary}{Corollary}[theorem]
\newtheorem{lemma}[theorem]{Lemma}
\newtheorem{proposition}[theorem]{Proposition}

\definecolor{codegreen}{rgb}{0,0.6,0}
\definecolor{codegray}{rgb}{0.5,0.5,0.5}
\definecolor{codepurple}{rgb}{0.58,0,0.82}
\definecolor{backcolour}{rgb}{0.95,0.95,0.92}

\usepackage{listings}
\usepackage{color}
\usepackage{hyperref}

\definecolor{dkgreen}{rgb}{0,0.6,0}
\definecolor{gray}{rgb}{0.5,0.5,0.5}
\definecolor{mauve}{rgb}{0.58,0,0.82}

\graphicspath{{../}}

\title{ Rethinking Language Model Scaling under Transferable Hypersphere Optimization} 

\author{
\textbf{Liliang Ren}$^{1}$ \quad \textbf{Yang Liu}$^{1}$ \quad 
\textbf{Yelong Shen}$^{1}$ \quad \textbf{Weizhu Chen}$^{1}$  \vspace{2mm}\\
$^{1}$Microsoft \\
\texttt{renll1402@gmail.com \quad wzchen@microsoft.com}
}

\begin{document}

\maketitle

\begin{abstract}

Scaling laws for large language models depend critically on the optimizer and parameterization. Existing hyperparameter transfer laws are mainly developed for first-order optimizers, and they do not structurally prevent training instability at scale. Recent hypersphere optimization methods constrain weight matrices to a fixed-norm hypersphere, offering a promising alternative for more stable scaling.
We introduce \textbf{HyperP} (Hypersphere Parameterization), the first framework for transferring optimal learning rates across model width, depth, training tokens, and Mixture-of-Experts (MoE) granularity under the Frobenius-sphere constraint with the Muon optimizer. We prove that weight decay is a first-order no-op on the Frobenius sphere, show that Depth-$\mu$P remains necessary, and find that the optimal learning rate follows the same data-scaling power law with the ``magic exponent'' 0.32 previously observed for AdamW. A single base learning rate tuned at the smallest scale transfers across all compute budgets under HyperP, yielding $1.58\times$ compute efficiency over a strong Muon baseline at $6\times10^{21}$ FLOPs. Moreover, HyperP delivers transferable stability: all monitored instability indicators, including $Z$-values, output RMS, and activation outliers, remain bounded and non-increasing under training FLOPs scaling.
We also propose SqrtGate, an MoE gating mechanism derived from the hypersphere constraint that preserves output RMS across MoE granularities for improved granularity scaling, and show that hypersphere optimization enables substantially larger auxiliary load-balancing weights, yielding both strong performance and good expert balance. We release our training codebase at \url{https://github.com/microsoft/ArchScale}.

\end{abstract}

\section{Introduction}

Neural scaling laws \cite{kaplan2020scaling, chinchilla, team2025every} are central to the compute-efficient development of Large Language Models (LLMs) \cite{gpt2, gpt3,openai2023gpt4, comanici2025gemini,deepseek-ai2024deepseekv3,team2025kimi,yang2025qwen3}. In practice, architectural designs and data recipes are explored at small scales for cost-savings, hoping that the improvements will persist when scaled up to prohibitively expensive compute budgets. However, identifying the true scaling behavior requires each model along the curve to be trained with near-optimal hyperparameters for its own scale. Even with well-tuned hyperparameters, scaling up training FLOPs routinely triggers logit explosion, activation outliers, and loss spikes \cite{zoph2022st0moe0, dettmers2022llm0int8000, chowdhery2023palm, qiu2025gated, qiu2026unified} that push training off the optimal trajectory or even lead to training failures \cite{zhang2022opt0, ren2025decoderhybriddecoder}. Existing hyperparameter transfer frameworks \cite{yang2022tensor, yang2023tensor, bjorck2025scaling, li2025predictable, chen2025how, mlodozeniec2025completedhyperparametertransfermodules} primarily study the transfer of optimal learning rate (LR), weight decay, and batch size across model width and depth, while largely overlooking how the learning rate should scale with training tokens to remain transferable across training FLOPs under second-order optimizers. Moreover, they do not provide structural guarantees on training stability at larger scales. In practice, mitigations such as $Z$-loss regularization \cite{zoph2022st0moe0} and careful weight decay scheduling \cite{defazio2025gradients} are applied as ad hoc patches rather than principled solutions. Recent work on hypersphere optimization \cite{bernstein2025manifolds, wen2025hyperball_part1, xie2026controlled} offers a fundamentally different approach. By constraining weight matrices to a fixed-norm sphere, hypersphere optimization provides structural stability guarantees: the weight-norm constraint naturally bounds output logit magnitudes for each linear projection. It also has the potential to eliminate weight decay, a notorious hyperparameter whose optimal value depends intricately on learning rate, training duration \cite{wang2024set, bergsma2025power}, and model width \cite{chen2025how}.

In this work, we derive the first learning rate transfer laws across training FLOPs for hypersphere optimization, covering width, depth, training duration, and MoE granularity, under a typical second-order hypersphere optimizer MuonH \cite{wen2025hyperball_part1}. We summarize our transfer laws as HyperP (Hypersphere Parameterization), a framework under which a single base learning rate tuned at the smallest scale transfers optimally to all compute budgets. Our theoretical and empirical results reveal that hypersphere optimization, when equipped with proper transfer laws, achieves not only optimal scaling efficiency but also \emph{transferable stability}: the same hyperparameters that work at small scale produce equally or more stable training dynamics at large scale. With these results, fair comparisons of architectural scaling become possible: every model at every scale is trained at its transferred optimal learning rates, so the resulting scaling curves reflect each architecture's near-optimal performance. Our contributions are as follows:

\begin{itemize}[leftmargin=1.5em,itemsep=2pt]

    \item \textbf{First transfer laws across FLOPs for hypersphere optimization.} We derive HyperP, which achieves optimal LR transfer across training FLOPs, spanning width, depth, training tokens, and MoE granularity. We prove that weight decay is a first-order no-op under Frobenius-sphere optimization, and empirically show that removing weight decay does not harm model quality. We also derive that Depth-$\mu$P \cite{yang2023tensor} is still required, disproving the claim that MuonH is inherently depth-transferable \cite{wen2025hyperball_part1}, and discover the ``magic exponent'' 0.32 for data scaling, matching the previous result on AdamW \cite{bjorck2025scaling} and suggesting universality across optimizers.

    \item \textbf{Transferable stability.} Empirically, we show that HyperP yields \emph{stability transfer}: all six monitored instability indicators ($Z$-values, output RMS, activation outlier percentages for both attention and MoE sub-layers) are bounded and non-increasing as we scale training FLOPs for MoE models from 913M to 13.3B total parameters.

    \item \textbf{Robust MoE scaling and load balancing.} We derive SqrtGate, a square-root gating mechanism that preserves output RMS across MoE granularities, reducing router $Z$-value peaks by $5\times$ compared to standard gating. HyperP's optimal LR transfers robustly across MoE sparsity ($S \in \{1,\ldots,32\}$) and granularity ($k \in \{2,\ldots,64\}$). It also allows for much larger auxiliary load balancing weights, achieving the best validation loss and expert balance simultaneously, in contrast to prior findings that a large load balancing weight hurts model quality \cite{lin2024auxloss}.
    
    \item \textbf{Scalable compute efficiency leverage.}
     A single base LR tuned at the smallest scale with 208M active parameters transfers effectively to all compute budgets explored. At the largest $6\times 10^{21}$ FLOPs, HyperP achieves $1.58\times$ Compute Efficiency Leverage (CEL) \cite{kaplan2020scaling,team2025every} over a strong Muon baseline with $\mu$P++ and weight decay scaling for dense models, and our MoE model with $S=8, k=8$ further achieves $3.38\times$ CEL over the dense models. The advantage of HyperP over the baseline grows monotonically with scale, implying even larger gains at frontier compute. 

    \item \textbf{Architecture comparison under optimal scaling.} With HyperP, we systematically examine dense (QK-Norm \cite{pmlr-v202-dehghani23a}, Gated Attention \cite{qiu2025gated}) and MoE (SqrtGate, shared expert \cite{Dai2024DeepSeekMoETU}) architectures at their optimal performance, revealing that while the performance gains of SqrtGate and Gated Attention shrink as the training FLOPs increase, they provide significant stability gains that can remove the RMS spikes and control the exploding $Z$-values.
\end{itemize}

\section{Background}\label{sec:background}

\paragraph{Hypersphere optimization.}
Hypersphere optimization constrains weight matrices to lie on a unit sphere under a chosen matrix norm. After each gradient update, the weight matrix $W$ is re-projected as follows,
\begin{equation}
W \leftarrow C\frac{W - \eta \, G}{\|W - \eta \, G\|},
\label{eq:hyper_opt}
\end{equation}
where $\eta$ is learning rate, $C$ is a constant, $G$ denotes the weight update and $\|\cdot\|$ is the chosen matrix norm. Several choices of the matrix norm have been proposed recently: MuonH \cite{wen2025hyperball_part1} uses the Frobenius norm with the Muon optimizer \cite{jordan2024muon}; Both MuonSphere \cite{xie2026controlled}  and SSO \cite{xie2026controlled}  use the spectral norm, while SSO further applies the steepest descent on the spectral sphere. Previous works have explored column- and row-wise weight normalization \cite{salimans2016weight,karras2023analyzing,loshchilov2025ngpt, fu2025nemotron0flash0}, while in this work we focus primarily on matrix-wise normalization for theoretical simplicity.

\paragraph{MuonH optimizer.}
MuonH (Muon-Hyperball) \cite{wen2025hyperball_part1} instantiates hypersphere optimization with the Muon optimizer \cite{jordan2024muon} and Frobenius norm, normalizing both the weight and the update:
\begin{equation}
\widehat G = c_G \frac{G}{\|G\|_F}, \qquad W^{+} = c_W \frac{W - \eta \, \widehat G}{\|W - \eta \, \widehat G\|_F},
\label{eq:muonh_update}
\end{equation}
where $c_W = \|W_0\|_F$ is the initial weight norm, $c_G=c_W$, and $G$ is the standard Muon update. MuonH is applied to each hidden weight matrix, while AdamW \cite{adw} with the same weight and update normalization schemes (denoted as AdamH) is used for the weight matrix of the language-model head, and the remaining vector parameters and embeddings are optimized with the original AdamW.  The update normalization further ensures that the relative update magnitude $\|\Delta W\|_F / \|W\|_F$ is constant for a given layer, enabling the same learning-rate scale to be used for both MuonH and AdamH.


\section{Scaling Hypersphere Optimization}\label{sec:theory}

We first motivate our choice of the Frobenius hypersphere by showing that it can eliminate the need for weight decay under the first-order approximation. We then derive the hyperparameter transfer laws for width and depth scaling by examining the theoretical implications of hypersphere optimization. The learning rate transfer law is further established for the data scaling scenario through empirical studies. We present our Hypersphere Parameterization, \emph{HyperP}, for training FLOPs scaling by summarizing our transfer laws over width, depth, and data scaling in \Cref{tab:mup-families}. Finally, we illustrate the theoretical stability benefits of hypersphere optimization in \Cref{sec:zloss} and propose our SqrtGate mechanism for the granularity scaling of Mixture-of-Experts (MoE) in \Cref{sec:moe-theory}.

\subsection{Elimination of Weight Decay}\label{sec:wd}

Among various choices of norms in hypersphere optimization, the Frobenius norm admits a simple geometric interpretation: after projection back to a fixed Frobenius sphere, only the tangent component of an update survives to first order.

\begin{theorem}[First-order form of Frobenius-sphere updates]
\label{thm:fnorm_tangent_update}
Let $W\in\mathbb{R}^{d_{\mathrm{out}}\times d_{\mathrm{in}}}$ satisfy $\|W\|_F=c_W$, and define
\begin{equation}
\widetilde W = W + \Delta,
\qquad
W^+ = c_W \frac{\widetilde W}{\|\widetilde W\|_F}.
\label{eq:fnorm_projected_update_main}
\end{equation}
Then, for sufficiently small $\|\Delta\|_F$,
\begin{equation}
W^+ - W
=
\Pi_T(\Delta) + O(\|\Delta\|_F^2),
\label{eq:fnorm_tangent_expansion_main}
\end{equation}
where
$
\Pi_T(\Delta)
=
\Delta - \frac{\langle \Delta, W\rangle_F}{\|W\|_F^2}W
$
is the tangent-space projection at $W$.
\end{theorem}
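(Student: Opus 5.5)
The plan is to Taylor-expand the normalization factor $c_W/\|\widetilde W\|_F$ around $\Delta=0$ and multiply out. First I compute the squared norm exactly, using $\|W\|_F=c_W$:
\begin{equation*}
\|\widetilde W\|_F^2 = c_W^2 + 2\langle W,\Delta\rangle_F + \|\Delta\|_F^2
= c_W^2\,(1+\epsilon),
\qquad
\epsilon := \frac{2\langle W,\Delta\rangle_F + \|\Delta\|_F^2}{c_W^2}.
\end{equation*}
By Cauchy--Schwarz, $|\langle W,\Delta\rangle_F|\le c_W\|\Delta\|_F$, so $|\epsilon|\le 2\|\Delta\|_F/c_W + \|\Delta\|_F^2/c_W^2 = O(\|\Delta\|_F)$. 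In particular $|\epsilon|<1/2$ once $\|\Delta\|_F$ is small enough (for instance $\|\Delta\|_F\le c_W/5$). This is exactly where the hypothesis ``sufficiently small $\|\Delta\|_F$'' enters, and it also guarantees $\widetilde W\neq 0$.

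Next I expand $c_W/\|\widetilde W\|_F = (1+\epsilon)^{-1/2} = 1-\tfrac{1}{2}\epsilon + r(\epsilon)$, where $|r(\epsilon)|\le K\epsilon^2$ for $|\epsilon|\le 1/2$ with some absolute constant $K$. This follows from Taylor's theorem with remainder, since the second derivative of $(1+x)^{-1/2}$ is bounded on $[-1/2,1/2]$. Substituting the definition of $\epsilon$ and absorbing the $\|\Delta\|_F^2/c_W^2$ part of $\epsilon$ into the error gives
\begin{equation*}
\frac{c_W}{\|\widetilde W\|_F} = 1 - \frac{\langle W,\Delta\rangle_F}{c_W^2} + \rho,\qquad |\rho| = O(\|\Delta\|_F^2).
\end{equation*}

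Finally I multiply by $\widetilde W = W+\Delta$:
\begin{equation*}
W^+ = W + \Delta - \frac{\langle W,\Delta\rangle_F}{c_W^2}W - \frac{\langle W,\Delta\rangle_F}{c_W^2}\Delta + \rho(W+\Delta).
\end{equation*}
The last two terms have Frobenius norm at most $\|\Delta\|_F^2/c_W + |\rho|(c_W+\|\Delta\|_F) = O(\|\Delta\|_F^2)$. Since $\|W\|_F^2=c_W^2$, the remaining terms are exactly $W+\Pi_T(\Delta)$, which gives \eqref{eq:fnorm_tangent_expansion_main}. The implied constants depend only on $c_W$, which is fixed.

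No step is a real obstacle. The one point needing care is to make the $O(\cdot)$ uniform: bound $\epsilon$ via Cauchy--Schwarz before invoking the Taylor remainder, so that the remainder estimate is applied on a fixed interval where it holds. I will also note that $\Pi_T$ is the orthogonal projection onto the tangent space $\{\Delta:\langle\Delta,W\rangle_F=0\}$ of the sphere at $W$. Consequently, any radial component of $\Delta$, such as weight decay $-\lambda\eta W$, is annihilated to first order. This remark sets up the no-op claim for weight decay.
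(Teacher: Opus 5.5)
Your proof is correct and follows essentially the same route as the paper's: expand the normalization factor $c_W/\|W+\Delta\|_F$ to first order using $\|W\|_F=c_W$, multiply through by $W+\Delta$, and absorb the cross term and the remainder into $O(\|\Delta\|_F^2)$. The differences are minor: you expand $(1+\epsilon)^{-1/2}$ in one step and use Cauchy--Schwarz to make the remainder uniform, whereas the paper vectorizes, expands the norm via $(1+z)^{1/2}$, then inverts, and leaves the uniformity of the error implicit.
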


A direct corollary is that radial shrinkage is removed to first order.

\begin{corollary}[Weight decay is a first-order no-op]
\label{cor:wd_noop}
If
$
\Delta = -\eta G - \eta \lambda W,
$
then
$
W^+ - W
=
-\eta \Pi_T(G) + O(\eta^2).
$
Hence, the weight decay term has no first-order effect under Frobenius renormalization.
\end{corollary}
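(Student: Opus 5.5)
The plan is to apply \Cref{thm:fnorm_tangent_update} directly with $\Delta = -\eta G - \eta\lambda W$. This choice has $\|\Delta\|_F \le \eta(\|G\|_F + \lambda c_W)$, so $O(\|\Delta\|_F^2) = O(\eta^2)$ once $G$ and $\lambda$ are treated as fixed, i.e.\ independent of $\eta$. The theorem then gives
\begin{equation*}
W^+ - W = \Pi_T(-\eta G - \eta\lambda W) + O(\eta^2).
\end{equation*}

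The key step is to expand $\Pi_T$ using its linearity in $\Delta$. We have $\Pi_T(-\eta G - \eta\lambda W) = -\eta\Pi_T(G) - \eta\lambda\Pi_T(W)$. Moreover,
\begin{equation*}
\Pi_T(W) = W - \frac{\langle W,W\rangle_F}{\|W\|_F^2}W = W - W = 0,
\end{equation*}
since $W$ is purely radial. The weight decay contribution therefore vanishes at first order, which yields $W^+ - W = -\eta\Pi_T(G) + O(\eta^2)$.

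The only point that needs care is the uniformity of the $O(\cdot)$ constant. The constant in the theorem depends on $c_W$, which is fixed. Converting $O(\|\Delta\|_F^2)$ into $O(\eta^2)$ requires $\|G\|_F$ and $\lambda$ to be bounded independently of $\eta$. Under MuonH this holds automatically, because $\widehat G$ has fixed norm $c_G$. I will state this assumption explicitly, and then the smallness condition of the theorem is met for all sufficiently small $\eta$. There is no substantial obstacle beyond this bookkeeping.
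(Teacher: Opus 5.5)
Your proof is correct and matches the paper's argument: apply \Cref{thm:fnorm_tangent_update} with $\Delta=-\eta(G+\lambda W)$, use linearity of $\Pi_T$, and note that $\Pi_T(W)=0$. You also point out explicitly that turning $O(\|\Delta\|_F^2)$ into $O(\eta^2)$ requires $\|G\|_F$ and $\lambda$ to be bounded independently of $\eta$, which is a worthwhile remark the paper leaves implicit.
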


The detailed proof is included in Appendix~\ref{app:proof_fnorm_tangent_update}.  Therefore, in this work, we only study the MuonH optimizer, which is based on the Frobenius norm, and we leave the formal uniqueness characterization as a future work.
Our theorem eliminates weight decay as a hyperparameter entirely, reducing the search space from the joint $(\eta, \lambda)$ plane to a single dimension $\eta$. In contrast, standard optimizers (e.g. AdamW, Muon) require careful co-tuning of the learning rate and weight decay, and the optimal weight decay interacts with the learning rate, training duration \cite{wang2024set,bergsma2025power}, and even model width \cite{chen2025how}, which further complicates hyperparameter transfer laws.

\subsection{Width Scaling}\label{sec:width}

Our derivation of the width transfer laws is based on the following observation: For $W \in \mathbb{R}^{d_{\mathrm{out}}\times d_{\mathrm{in}}}$, the spectral and Frobenius norms satisfy
\[
\|W\|_2 \le \|W\|_F \le \sqrt{r}\,\|W\|_2,
\qquad
r := \operatorname{rank}(W) \le \min(d_{\mathrm{in}},d_{\mathrm{out}}).
\]
The upper bound is attained if and only if $W$ has full rank and all its nonzero singular values are equal. When the update $\Delta W$ is orthogonalized, as in Muon, the resulting dynamics tend to avoid highly anisotropic spectra and instead favor a relatively flat singular-value profile. In that regime, $\|W\|_F/\sqrt{r}$ becomes a good proxy of $\|W\|_2$, which leads to the same width-transfer property as in $\mu$P \cite{yang2022tensor}. 

\begin{theorem}[Width transfer under Frobenius sphere]
\label{thm:width-scaling-hyperball}
Let $Y = WX$, $W \in \mathbb{R}^{d_{\mathrm{out}}\times d_{\mathrm{in}}}$, and assume
$\|W\|_{\mathrm{rms}} = C/\sqrt{d_{\mathrm{in}}}$,
equivalently $\|W\|_F = C\sqrt{d_{\mathrm{out}}}$,
for a width-independent constant $C=O(1)$.
Assume further that $W$ is approximately isotropic on its input space, in the sense that its nonzero singular values are sufficiently uniform. Equivalently, for typical inputs $X$,
\[
\|WX\|_2 \approx \frac{\|W\|_F}{\sqrt{\min(d_{\mathrm{in}},d_{\mathrm{out}})}}\,\|X\|_2.
\]
Since $\sqrt{d_{\mathrm{in}}/ \min(d_{\mathrm{in}},d_{\mathrm{out}})} = O(1)$, then
\[
\|Y\|_{\mathrm{rms}}
\approx
C\,\|X\|_{\mathrm{rms}}.
\]
\end{theorem}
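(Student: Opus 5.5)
The argument is a direct chain of norm conversions; the only real content is showing the isotropy assumption cancels the width dependence. We take the single vector case $X\in\mathbb{R}^{d_{\mathrm{in}}}$, with the convention $\|v\|_{\mathrm{rms}}=\|v\|_2/\sqrt{\dim v}$; a matrix $X$ of several inputs follows column by column. We start from the isotropy hypothesis $\|WX\|_2 \approx \|W\|_F\,\|X\|_2/\sqrt{m}$ with $m:=\min(d_{\mathrm{in}},d_{\mathrm{out}})$, and substitute the sphere constraint $\|W\|_F=C\sqrt{d_{\mathrm{out}}}$. This gives $\|Y\|_2\approx C\sqrt{d_{\mathrm{out}}/m}\,\|X\|_2$.

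Next we convert both sides to RMS norms. Writing $\|Y\|_2=\sqrt{d_{\mathrm{out}}}\,\|Y\|_{\mathrm{rms}}$ and $\|X\|_2=\sqrt{d_{\mathrm{in}}}\,\|X\|_{\mathrm{rms}}$, the factors $\sqrt{d_{\mathrm{out}}}$ cancel and we get
\[
\|Y\|_{\mathrm{rms}}\approx C\sqrt{d_{\mathrm{in}}/m}\,\|X\|_{\mathrm{rms}}.
\]
For the remaining prefactor we split into cases. If $d_{\mathrm{in}}\le d_{\mathrm{out}}$, then $m=d_{\mathrm{in}}$ and the prefactor is exactly $1$. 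Otherwise it equals $\sqrt{d_{\mathrm{in}}/d_{\mathrm{out}}}$, which is a fixed architectural ratio (e.g.\ the MLP expansion factor). Under proportional width scaling this ratio is width-independent and can be absorbed into $C$, so the statement holds with $C=O(1)$.

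The main obstacle is justifying the isotropy step, i.e.\ that $\|WX\|_2\approx(\|W\|_F/\sqrt{m})\|X\|_2$ for typical $X$. We would cite it as the stated assumption and motivate it with the bound $\|W\|_2\le\|W\|_F\le\sqrt{r}\,\|W\|_2$ discussed before the theorem. When the nonzero singular values are equal, both inequalities are tight, so $\|W\|_2=\|W\|_F/\sqrt{r}$. For $X$ not concentrated in the null space of $W$, which is generic when $d_{\mathrm{in}}\le d_{\mathrm{out}}$, we also have $\|WX\|_2\approx\|W\|_2\|X\|_2$, and this gives the claim.

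In the case $d_{\mathrm{in}}>d_{\mathrm{out}}$, a random $X$ only retains a fraction $d_{\mathrm{out}}/d_{\mathrm{in}}$ of its energy in the row space of $W$. This is exactly the $O(1)$ discrepancy already absorbed into $C$ above, so we treat the step as the stated heuristic rather than prove it rigorously.
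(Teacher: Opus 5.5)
Your proposal is correct and follows the paper's proof. Both substitute $\|W\|_F=C\sqrt{d_{\mathrm{out}}}$ into the assumed isotropy relation, convert $\ell_2$ norms to RMS norms, and treat $\sqrt{d_{\mathrm{in}}/\min(d_{\mathrm{in}},d_{\mathrm{out}})}$ as an $O(1)$ factor; you keep that factor explicit through a case split, whereas the paper silently replaces $\sqrt{\min(d_{\mathrm{in}},d_{\mathrm{out}})}$ by $\sqrt{d_{\mathrm{in}}}$ inside its chain of approximations. One small slip in your motivating paragraph: equal nonzero singular values make only the upper bound $\|W\|_F\le\sqrt{r}\,\|W\|_2$ tight, not both (the lower bound is tight only when $r=1$), though your conclusion $\|W\|_2=\|W\|_F/\sqrt{r}$ is unaffected.
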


The proof is included in Appendix~\ref{app:width-scaling-proof}. Therefore, hypersphere optimization with $\|W\|_F = C\sqrt{d_{\mathrm{out}}}$ preserves width transfer without explicit $1/w$ learning rate scaling as in standard $\mu$P .


\subsection{Depth Scaling}\label{sec:depth-theory}

We analyze how the learning rate and residual scaler depend on depth when optimization is performed on a Frobenius sphere. Consider a depth-\(L\) residual network
\begin{equation}
x_{l+1} = x_l + \alpha_L f_l(x_l;W_l), \qquad l=1,\dots,L,
\label{eq:residual_depth_fnorm}
\end{equation}
where \(\alpha_L\) denotes the depth-dependent residual scaler. Each matrix parameter is constrained to satisfy \(\|W_l\|_F = c_W\). We first study the hypersphere optimization with only weight normalization as in \Cref{eq:hyper_opt} and then consider the variant in which the raw update is normalized to have a fixed Frobenius norm as in \Cref{eq:muonh_update}, where $c_W$ is not necessarily equal to $c_G$.

\begin{theorem}[Depth scaling under Frobenius-sphere optimization]
\label{thm:depth_scaling_fnorm}
Under residual networks, assume that local Jacobians are $O(1)$ under the chosen width parameterization, and that the update step size is sufficiently small for first-order linearization to hold.
\begin{enumerate}
    \item \textbf{Weight renormalization under scale-dependent optimizers.} If only the weights are renormalized as in \eqref{eq:hyper_opt}, and the layerwise update satisfies $\|G_l\|_F = O(\alpha_L)$, then the total first-order function perturbation is $O(L\alpha_L^2 \eta_l)$. In particular, for the standard depth-stabilizing scaling $\alpha_L=L^{-1/2}$, a depth-independent learning rate $\eta_l=O(1)$ yields an $O(1)$ function-space update.
    
    \item \textbf{Normalizing both weight and update.} If the weight update is normalized so that $\|U_l\|_F=c_G$, then the total first-order function perturbation is $O(L\alpha_L \eta_l)$. Hence, the learning rate must scale as
    \begin{equation}
    \eta_l = O\!\left(\frac{1}{L\alpha_L}\right)
    \label{eq:main_depth_rule_short}
    \end{equation}
    to maintain an $O(1)$ function-space update. In particular, when $\alpha_L=L^{-1/2}$,
    $
    \eta_l = O(L^{-1/2}).
    $
    \item \textbf{Post-norm Architecture.} The same exponent holds for post-norm residual blocks
    \begin{equation}
    x_{l+1} = \mathrm{LayerNorm}\!\left(x_l + \alpha_L f_l(x_l;W_l)\right),
    \end{equation}
    since the standard deviation of the input of LayerNorm \cite{ba2016layer} is $O(1)$ with depth.
\end{enumerate}
\end{theorem}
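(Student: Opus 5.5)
The plan is to linearize the network output around the current weights and add up the per-layer contributions. Theorem~\ref{thm:fnorm_tangent_update} converts each projected update into its tangent component; assuming $O(1)$ Jacobians, the residual branches then turn each weight change into an output change of size $\alpha_L$ times that weight change.

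\textbf{Step 1: the per-layer weight change.} By Theorem~\ref{thm:fnorm_tangent_update}, with $\Delta_l=-\eta_l G_l$ (case 1) or $\Delta_l=-\eta_l U_l$ (case 2), we get
\[
W_l^+-W_l=\Pi_T(\Delta_l)+O(\|\Delta_l\|_F^2).
\]
Since $\|\Pi_T(\Delta_l)\|_F\le\|\Delta_l\|_F$, the first-order weight change has Frobenius norm at most $\eta_l\|G_l\|_F$ in case 1 and at most $\eta_l c_G$ in case 2. The quadratic remainder is discarded under the small-step assumption.

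\textbf{Step 2: the per-layer function change.} Perturbing $W_l$ by $\delta W_l$ changes the residual stream at layer $l+1$ by $\alpha_L\,\partial_W f_l\,[\delta W_l]$, which is $O(\alpha_L\|\delta W_l\|_F)$ because the local Jacobians are $O(1)$. This perturbation is then carried to the output through the remaining blocks $x\mapsto x+\alpha_L f_k(x)$. Their Jacobians are $I+\alpha_L J_k$, and with $\alpha_L=L^{-1/2}$ the product over blocks stays $O(1)$. This bounded propagation is where I expect the main obstacle to lie. I would first argue it heuristically: the product of $(I+\alpha_L J_k)$ has norm at most $\exp(\alpha_L\sum_k\|J_k\|)$, and in the Depth-$\mu$P regime the $J_k$ are roughly independent and mean-zero, so the cross terms average out and the variance grows only like $1+L\alpha_L^2=O(1)$. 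I would state this explicitly as part of the $O(1)$-Jacobian assumption, consistent with how the theorem is phrased.

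\textbf{Step 3: sum over layers.} To first order the total output change is the sum of the $L$ per-layer contributions. Using a worst-case (aligned) sum, in case 1 the total is
\[
\sum_l \alpha_L\,\eta_l\, O(\alpha_L)=O(L\alpha_L^2\eta_l).
\]
This is $O(1)$ for $\alpha_L=L^{-1/2}$ and $\eta_l=O(1)$. In case 2 the update norm is pinned at $c_G$ rather than shrinking with $\alpha_L$, so the total is
\[
\sum_l\alpha_L\,\eta_l\,c_G=O(L\alpha_L\eta_l).
\]
Requiring this to be $O(1)$ gives $\eta_l=O(1/(L\alpha_L))$, which is $L^{-1/2}$ for $\alpha_L=L^{-1/2}$. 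The contrast between the cases is the point of the theorem. In case 1 the gradient already carries a factor $\alpha_L$ from backpropagation through the scaled branch, and only the weight norm is fixed. In case 2 the normalization $\|U_l\|_F=c_G$ removes that factor, so it has to be put back through the learning rate.

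\textbf{Step 4: post-norm.} I would note that LayerNorm at input $z=x_l+\alpha_L f_l$ has Jacobian of size $1/\mathrm{std}(z)$ times a projection. Since $\mathrm{std}(z)=O(1)$ at every depth by assumption, each LayerNorm contributes an $O(1)$ factor. The per-layer contribution therefore remains $O(\alpha_L\|\delta W_l\|_F)$, and the same summation gives the same exponent.
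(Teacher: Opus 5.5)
Your proposal follows the paper's proof essentially step for step: you linearize, use Theorem~\ref{thm:fnorm_tangent_update} to bound $\|\Delta W_l\|_F$ by $\eta_l\|G_l\|_F$ or $\eta_l c_G$, pick up one factor of $\alpha_L$ from the residual branch, sum the $L$ contributions with the triangle inequality, and treat post-norm through the $O(1/\sigma)$ LayerNorm Jacobian. The only difference is the transport product $\prod_k (I+\alpha_L J_k)$: the paper dismisses it by noting that each factor has $O(1)$ operator norm, but a per-factor bound of $1+O(\alpha_L)$ only gives $e^{O(\sqrt{L})}$ for the product when $\alpha_L=L^{-1/2}$, so making bounded propagation an explicit hypothesis, as you do, is the more careful version of the same step and lets you drop the independence heuristic, which the paper explicitly says it does not rely on.
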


The proofs are included in Appendix~\ref{app:proof_depth_scaling_fnorm}. Our theorem shows that the original forms of Depth-$\mu$P \cite{yang2023tensor} for both scale-dependent and scale-invariant optimizers are preserved under hypersphere optimization, and the post-norm residual does not remove the dependence on model depth for accumulated weight perturbations. Importantly, our result does not rely on any independence assumptions across layers. Under the small-step condition \(\eta\|G_l\|_F \ll \|W_l\|_F\), we use a first-order Taylor expansion to express the total perturbation as a sum of layerwise contributions and then apply the triangle inequality to obtain a deterministic worst-case bound. Our theorem shows that, contrary to the original claim of the authors in MuonH \cite{wen2025hyperball_part1}, the optimizer is not inherently transferable across model depth because they neglect the cumulative angular drift introduced by the summation of residual connections. We further provide empirical verification of our theoretical result in \Cref{sec:param-scaling}.

\subsection{Data Scaling}\label{sec:data}
Since there is no clear theory on how the learning rate should scale with the training tokens, we study the transfer law with empirical studies. We fix model depth $d{=}8$ (208M parameters) and vary training tokens from 10.4B to 166.4B, sweeping LR on a fine grid $\{0.004, 0.006, 0.008, 0.010, 0.012, 0.014, 0.016, 0.018\}$ with quadratic fitting in $\log(\eta)$ space. The detailed setup is provided in beginning of \Cref{sec:experiments}.

\begin{figure}[H]
    \centering
    \includegraphics[width=\linewidth]{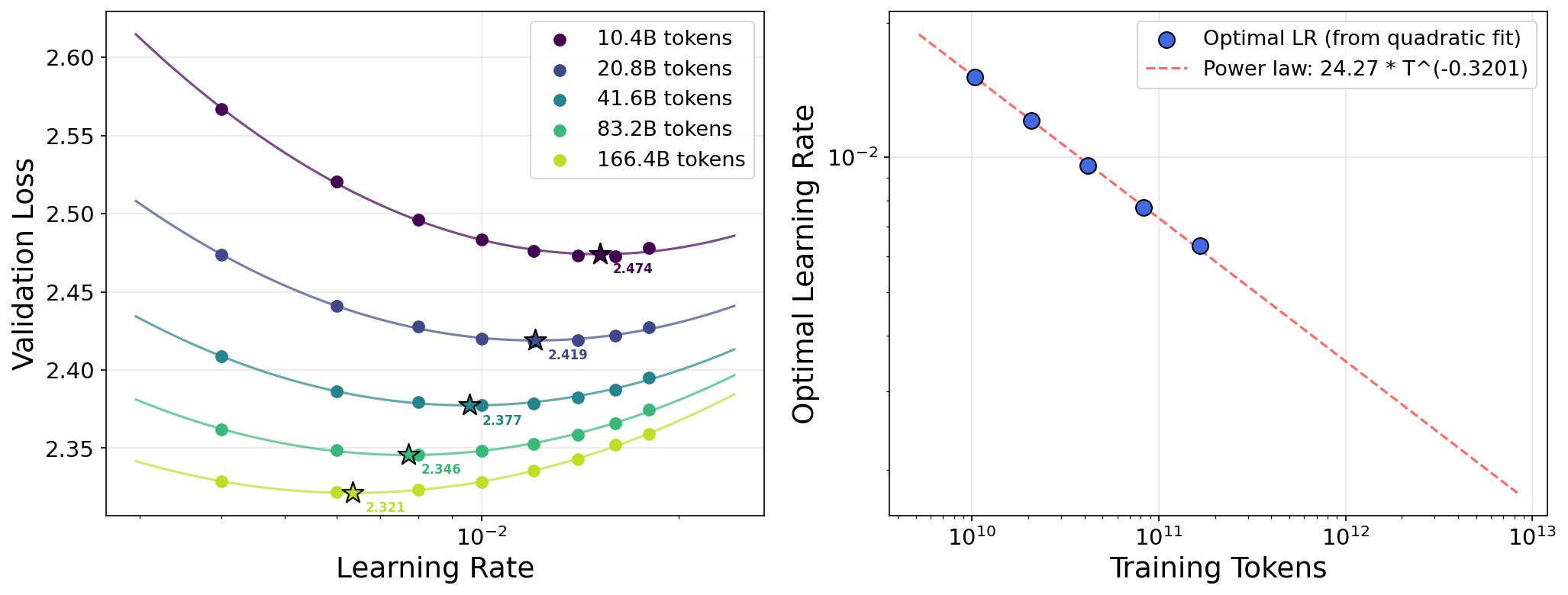}
    \caption{Left: Loss vs.\ LR at different token budgets. Right: Fitted optimal LR vs.\ training tokens on log-log scale, showing a clean power-law relationship with exponent $0.32$. The exact values are reported in \Cref{tab:data-scaling}. }
    \label{fig:data-scaling}
\end{figure}

As shown in \Cref{fig:data-scaling}, the optimal LR follows a clean power law:
\begin{equation}\label{eq:data-scaling}
\eta^* = 24.27 \cdot T^{-0.320}
\end{equation}
where $T$ is the total number of training tokens. 
We also conduct leave-one-out cross-validation, which gives a mean absolute prediction error of only 1.50\% for optimal LR.
The exponent $0.32$ is remarkably consistent with the finding of Bjorck et al.\ \cite{bjorck2025scaling}, who report the same exponent for AdamW on different architectures and datasets. This ``magic exponent'' may be a universal property of gradient-based optimization in neural networks, independent of the specific optimizer. We leave a more rigorous empirical verification and the theoretical analysis of this coincidence as intriguing future work.

\subsection{Hypersphere Parametrization}
We summarize the complete HyperP framework in \Cref{tab:mup-families}, contrasting it with $\mu$P and $\mu$P++ \cite{ren2025decoderhybriddecoder}. HyperP eliminates the weight decay entirely as in \Cref{sec:wd}, inherits native width transfer from the Frobenius-sphere constraint in \Cref{sec:width}, applies the depth scaling derived in \Cref{sec:depth-theory}, and incorporates the data scaling $\eta \propto T^{-0.32}$ established in \Cref{sec:data}.
\begin{table}[htb]
\centering
\small
\caption{Differences between $\mu$P, $\mu$P++ \cite{ren2025decoderhybriddecoder} and HyperP under Muon-based optimizers \cite{jordan2024muon}. \textit{LR mult.} denotes the per-parameter multiplier applied on top of the global learning-rate
($\eta$), \textit{Init. std.} means the standard deviation of the initialization, \textit{Res. mult.}
is the multiplier applied to the output of residual branches and \textit{WD} denotes the weight decay. $w$ is the model width, $d$ means model depth and $T$ is the training tokens. $\mu$P and $\mu$P++ applies Muon for matix-like parameters with adjustments of LR and WD following \cite{chen2025how} and AdamW for unembedding and vector-like parameters, while HyperP adopts MuonH \cite{wen2025hyperball_part1} for matrix-like parameters, AdamH for unembedding weights and AdamW for vector-like parameters.
  } 

\vspace{0.1cm}
\label{tab:mup-families}
\begin{tabular}{lcccccc}
\toprule
\textbf{Parameter} &
\textbf{Scheme} &
\textbf{LR mult.} &
\textbf{Init. std. } &
\textbf{Res. mult.}& \textbf{Weight mult.} & \textbf{WD} \\
\midrule
\multirow{3}{*}{Embedding/Vector} 
 & $\mu$P   & $\propto1$                    &  $\propto1$                           & — & $\propto 1$ & $\propto 1$ \\
 & $\mu$P++ & $\propto 1/\sqrt{d}$                    & $\propto1$                          & — & $\propto 1$ & 0 \\
 & HyperP & $\propto 1/\sqrt{d}$                    & $\propto1$                          & — & $\propto 1$ & 0 \\
[2pt]
\midrule
\multirow{3}{*}{Unembedding} 
 & $\mu$P   & $\propto 1$                    & $\propto1$                           & — &  $\propto 1/w$ & $\propto 1$ \\
 & $\mu$P++ & $\propto 1/\sqrt{d}$                    & $\propto1$                           & — &  $\propto 1/w$ & 0 \\
 & HyperP & $\propto 1/\sqrt{d}$                    & $\propto1$                           & — &  $\propto 1$ & 0 \\
[2pt]
\midrule
\multirow{3}{*}{Hidden Weights } 
 & $\mu$P   & $\propto \sqrt{d_{out}/d_{in}}$                  & $\propto 1/\sqrt{d_{in}}$                              & $1$ & $\propto 1$ & $\propto 1/w$ \\
 & $\mu$P++ & $\propto \sqrt{d_{out}/(d_{in}d)}$                  & $\propto 1/\sqrt{d_{in}}$                                 & $1/\!\sqrt{2d}$ & $\propto 1$ & $\propto 1/w$\\
  & HyperP & $ \propto1/(T^{0.32}\sqrt{d})$                  & $\propto 1/\sqrt{d_{in}}$                                 & $1/\!\sqrt{2d}$ & $\propto 1$ & 0\\

\bottomrule
\end{tabular}
\vspace{-0.3cm}
\end{table}

\subsection{Bounded Logit Magnitudes}\label{sec:zloss}
In standard training, the weight norms can grow unbounded due to the translation-invariance property of Softmax, causing attention, router or LM head logits $z$ to explode. The $Z$-loss penalty $\lambda_z \log^2 Z$ (where $Z = \sum_i \exp(z_i)$) is a common practice \cite{zoph2022st0moe0} introduced to constrain the growth of the log-sum-exponential of the logits.
A key practical benefit of hypersphere optimization is that it naturally bounds the logit magnitudes in both attention and MoE routing, alleviating the need for $Z$-loss regularization. We state the proposition below and provide empirical verification in \Cref{sec:stability}.

\begin{proposition}[Bounded Logits under Hypersphere Constraint]\label{prop:bounded}
For any weight matrix $W$ with $\|W\|_F = C $ and input $x$ with $\|x\|_{\mathrm{rms}} = O(1)$:
\begin{equation}
\|Wx\|_2 \leq  \|W\|_F \|x\|_2 = C \|x\|_2 = C  \|x\|_{\mathrm{rms}} \sqrt{d_{\mathrm{in}}}.
\end{equation}
The per-element logit magnitude is bounded as $|[Wx]_j| \leq C \|x\|_2$, and the RMS of the logit vector satisfies:
\begin{equation}
\|Wx\|_{\mathrm{rms}} \leq C \sqrt{\frac{d_{\mathrm{in}}}{d_{\mathrm{out}}}} \|x\|_{\mathrm{rms}}.
\end{equation}
The proposition similarly applies to the spectral sphere scenario where $\|W\|_2 = C $.
\end{proposition}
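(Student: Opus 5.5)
The proof is a chain of elementary norm inequalities. I will use the conventions $\|v\|_{\mathrm{rms}} = \|v\|_2/\sqrt{\dim v}$ for vectors and $\|x\|_2 = \sqrt{d_{\mathrm{in}}}\,\|x\|_{\mathrm{rms}}$ for $x\in\mathbb{R}^{d_{\mathrm{in}}}$.

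\textbf{Step 1: the $\ell_2$ bound.} The first step is the bound $\|Wx\|_2 \le \|W\|_2\|x\|_2 \le \|W\|_F\|x\|_2$. The first inequality is the definition of the operator norm. The second follows from $\|W\|_2^2 = \sigma_{\max}^2 \le \sum_i \sigma_i^2 = \|W\|_F^2$, which is the left inequality of the norm comparison already recorded in \Cref{sec:width}. Alternatively, one can write $[Wx]_j = \langle w_j, x\rangle$ with $w_j$ the $j$-th row, apply Cauchy--Schwarz to get $|[Wx]_j| \le \|w_j\|_2\|x\|_2$, and sum the squares over $j$ to obtain $\|Wx\|_2^2 \le \sum_j \|w_j\|_2^2\,\|x\|_2^2 = \|W\|_F^2\|x\|_2^2$. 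Substituting $\|W\|_F = C$ and $\|x\|_2 = \sqrt{d_{\mathrm{in}}}\,\|x\|_{\mathrm{rms}}$ gives the displayed chain of equalities.

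\textbf{Step 2: the per-element bound.} The same Cauchy--Schwarz step gives $|[Wx]_j| \le \|w_j\|_2\|x\|_2$. Since each row norm satisfies $\|w_j\|_2 \le \|W\|_F = C$, this yields $|[Wx]_j| \le C\|x\|_2$. A simpler route is to note that $|[Wx]_j| \le \|Wx\|_2$ and apply Step 1.

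\textbf{Step 3: the RMS bound.} Dividing Step 1 by $\sqrt{d_{\mathrm{out}}}$ gives $\|Wx\|_{\mathrm{rms}} \le C\sqrt{d_{\mathrm{in}}/d_{\mathrm{out}}}\,\|x\|_{\mathrm{rms}}$. Because $\|x\|_{\mathrm{rms}} = O(1)$ and $C$ is fixed by the hypersphere constraint, the bound does not depend on training time. This is the point of the proposition: under the constraint the logits cannot drift upward the way they can under unconstrained weight growth.

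\textbf{Step 4: the spectral sphere.} For the spectral case $\|W\|_2 = C$, I will use only the first inequality of Step 1, $\|Wx\|_2 \le \|W\|_2\|x\|_2 = C\|x\|_2$. Steps 2 and 3 then go through verbatim, and this bound is in fact tighter.

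No step is a real obstacle. The only care needed is bookkeeping: keeping the rms/$\ell_2$ conversions consistent (dividing by $\sqrt{d_{\mathrm{in}}}$ on the input side and by $\sqrt{d_{\mathrm{out}}}$ on the output side), and making clear that the constant $C$ here is a generic sphere radius, not the width-scaled $C\sqrt{d_{\mathrm{out}}}$ of \Cref{thm:width-scaling-hyperball}.
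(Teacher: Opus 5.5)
Your argument is correct and is essentially the paper's own. The paper gives no separate proof: the displayed chain in the statement is the whole argument, using $\|W\|_2\le\|W\|_F$ (Lemma~\ref{lem:spectral-frobenius-sandwich}) together with the conversions $\|x\|_2=\sqrt{d_{\mathrm{in}}}\,\|x\|_{\mathrm{rms}}$ and $\|Wx\|_{\mathrm{rms}}=\|Wx\|_2/\sqrt{d_{\mathrm{out}}}$.

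One small precision for the spectral case: Step~2's row-norm route does not carry over verbatim, because there $\|W\|_F$ need not equal $C$. Use your ``simpler route'' $|[Wx]_j|\le\|Wx\|_2\le C\|x\|_2$ instead, or note that $\|w_j\|_2\le\|W\|_2=C$.
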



\subsection{MoE Granularity Scaling}
\label{sec:moe-theory}

In a Mixture-of-Experts (MoE) \cite{DBLP:conf/iclr/ShazeerMMDLHD17} layer, the output $y$ is formed by combining the Top-$k$ routed experts selected from a larger expert pool. Let $k$ denote the number of active routed experts (the granularity) and let $S$ denote the sparsity ratio, so the layer contains $kS$ routed experts in total. Formally,
\begin{equation}
y = \sum_{i=1}^{k} g_i E_i(x) + E_{\text{shared}}(x),
\qquad \sum_{i=1}^{k} g_i = 1,
\end{equation}
where $g_i$ are the routing weights over the selected $k$ experts, following the design of \cite{DBLP:conf/iclr/ShazeerMMDLHD17, openai2025gpt0oss0120b}, and $E_{\text{shared}}$ is an optional shared expert \cite{deepseek-ai2024deepseekv2}.

\begin{proposition}[Classical gating is $k$-dependent]
\label{prop:moe-classical-gating}
Let
$
y_{\mathrm{route}}=\sum_{i=1}^{k} g_i E_i(x).
$
Assume the active expert outputs satisfy
$
\|E_i(x)\|_{\mathrm{rms}} = r  \, \,\text{for all } i,
$
and are approximately pairwise uncorrelated:
$
\langle E_i(x), E_j(x)\rangle \approx 0 \,\, \text{for } i\neq j.
$
Then
\begin{equation}
\|y_{\mathrm{route}}\|_{\mathrm{rms}}
\approx
r \sqrt{\sum_{i=1}^{k} g_i^2}.
\end{equation}
In particular, if the routing weights are near-uniform on the selected experts, i.e. $g_i \approx 1/k$, then
\begin{equation}
\|y_{\mathrm{route}}\|_{\mathrm{rms}}
\approx
\frac{r}{\sqrt{k}}.
\end{equation}
By contrast, $\|y_{\mathrm{route}}\|_{\mathrm{rms}} \approx r$ is recovered only in the degenerate case where routing is nearly one-hot, i.e. one $g_i\approx 1$ and the others are close to zero.
\end{proposition}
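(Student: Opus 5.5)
The plan is to expand the squared Euclidean norm of the routed output and work with squared RMS values, since the RMS is just the $\ell_2$ norm rescaled by $1/\sqrt{d}$, where $d$ is the common output dimension of the experts. Write $\|v\|_{\mathrm{rms}}^2 = \|v\|_2^2/d$. Then
\begin{equation*}
\|y_{\mathrm{route}}\|_2^2 = \sum_{i=1}^k g_i^2 \|E_i(x)\|_2^2 + \sum_{i\neq j} g_i g_j \langle E_i(x), E_j(x)\rangle .
\end{equation*}
I will treat the gates $g_i$ as scalars, fixed for the given $x$, so they pull out of the inner products.

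The pairwise-uncorrelated assumption removes the cross terms. For the diagonal terms, the equal-RMS assumption gives $\|E_i(x)\|_2^2 = d r^2$ for every $i$. Dividing by $d$ yields
\begin{equation*}
\|y_{\mathrm{route}}\|_{\mathrm{rms}}^2 \approx r^2 \sum_i g_i^2 ,
\end{equation*}
and taking square roots gives the first claim.

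For the near-uniform case, substitute $g_i = 1/k$. This gives $\sum_i g_i^2 = k \cdot k^{-2} = 1/k$, hence $\|y_{\mathrm{route}}\|_{\mathrm{rms}} \approx r/\sqrt{k}$.

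For the final "by contrast" claim, use the constraint $\sum_i g_i = 1$ with $g_i \ge 0$, which holds for softmax-normalized gates. Under it we have
\begin{equation*}
\tfrac{1}{k} \le \sum_i g_i^2 \le \Big(\max_i g_i\Big)\sum_i g_i = \max_i g_i \le 1 .
\end{equation*}
The upper bound $\sum_i g_i^2 = 1$ is attained only when the gate vector is a vertex of the simplex, i.e.\ exactly one-hot. By continuity, $\sum_i g_i^2 \approx 1$ forces $\max_i g_i \approx 1$, which means the remaining gates are near zero. So the RMS approaches $r$ only in the nearly one-hot regime.

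The main obstacle is deciding how rigorous the "$\approx$" should be. I will propagate an explicit error term. If $|\langle E_i, E_j\rangle| \le \epsilon\, d r^2$ for $i \neq j$, then the cross terms contribute at most
\begin{equation*}
\epsilon\, r^2 \sum_{i\neq j} g_i g_j \le \epsilon\, r^2 \Big(\sum_i g_i\Big)^2 = \epsilon\, r^2 .
\end{equation*}
This makes the approximation quantitative. It also shows the approximation is meaningful only when $\epsilon \ll 1/k$, so that the error stays small relative to the main term $r^2/k$ in the uniform case. I will state this as a remark rather than strengthen the hypotheses.
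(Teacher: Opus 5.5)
Your proposal is correct and is the intended argument: the paper gives no separate proof of this proposition, and its displayed formula $r\sqrt{\sum_{i} g_i^2}$ is exactly the cross-term-free expansion of $\|y_{\mathrm{route}}\|_2^2$ that you carry out. Your simplex chain $1/k \le \sum_i g_i^2 \le \max_i g_i \le 1$ and your explicit $\epsilon r^2$ bound on the cross terms go beyond what the paper supplies. The chain already gives the nearly one-hot converse quantitatively, with no appeal to continuity needed. The cross-term bound shows that the $r/\sqrt{k}$ conclusion needs correlations $\epsilon \ll 1/k$.
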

This shows that classical softmax gating preserves RMS only in the worst-case collapsed-routing regime. In the more typical case where multiple selected experts contribute non-trivially, the routed signal shrinks with $k$. In our setting, hypersphere optimization makes the equal-RMS assumption natural by explicitly controlling the output scale of each expert with weight normalization. Moreover, Muon optimizer can indirectly reduce co-adaptation across experts by reducing anisotropy in each expert's matrix updates, which makes the pairwise-uncorrelated approximation more realistic. This motivates us to analyze the routed branch under the equal-RMS, weak-correlation regime rather than under the worst-case scenario. We propose to replace $g_i$ with $\sqrt{g_i}$, and denote our approach as \textbf{SqrtGate} (Square-root Gate).

\begin{proposition}[SqrtGate is approximately $k$-invariant]
\label{prop:moe-sqrt-gating}
Define the routed branch by
$
y_{\mathrm{route}}'
=
\sum_{i=1}^{k} \sqrt{g_i}\, E_i(x),
\, \sum_{i=1}^{k} g_i = 1.
$
Under the same assumptions as in Proposition~\ref{prop:moe-classical-gating},
\begin{equation}
\|y_{\mathrm{route}}'\|_{\mathrm{rms}}
\approx
r \sqrt{\sum_{i=1}^{k} (\sqrt{g_i})^2}
=
r.
\end{equation}
Hence the routed-branch RMS is approximately invariant to the granularity $k$.
\end{proposition}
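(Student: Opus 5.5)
The plan is to reduce the statement to the same Pythagorean expansion used in Proposition~\ref{prop:moe-classical-gating}, now with coefficients $a_i=\sqrt{g_i}$ in place of $g_i$. Write $d$ for the model width, so that $\|v\|_{\mathrm{rms}}^2=\|v\|_2^2/d$ for any $v\in\mathbb{R}^d$.

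First I expand the squared Euclidean norm of the routed branch with arbitrary real coefficients $a_i$:
\[
\Bigl\|\sum_{i=1}^k a_i E_i(x)\Bigr\|_2^2=\sum_{i=1}^k a_i^2\|E_i(x)\|_2^2+\sum_{i\neq j}a_ia_j\langle E_i(x),E_j(x)\rangle .
\]
Next I apply the two standing assumptions. Equal RMS gives $\|E_i(x)\|_2^2=d\,r^2$ for every $i$. Approximate orthogonality, $\langle E_i,E_j\rangle\approx 0$, makes the cross terms negligible. Dividing by $d$ and taking square roots then yields $\|y\|_{\mathrm{rms}}\approx r\sqrt{\sum_i a_i^2}$. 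This is exactly the computation that establishes Proposition~\ref{prop:moe-classical-gating}, so I can simply invoke it with $g_i$ replaced by $a_i$.

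Then I substitute $a_i=\sqrt{g_i}$. This is well defined because the softmax weights satisfy $g_i\ge 0$. Since $a_i^2=g_i$, the constraint $\sum_i g_i=1$ gives $\sum_i a_i^2=1$, so $\|y'_{\mathrm{route}}\|_{\mathrm{rms}}\approx r$. The right-hand side has no dependence on $k$ or on the particular distribution of the $g_i$, which gives approximate $k$-invariance. By contrast, $\sum_i g_i^2$ ranges from $1/k$ for uniform routing up to $1$ for one-hot routing, which is the $k$-dependence seen under classical gating.

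The only delicate point is making "$\approx$" quantitative, because the cross terms are now weighted by $\sqrt{g_ig_j}$, which is larger than $g_ig_j$. I would assume a correlation bound $|\langle E_i,E_j\rangle|\le \epsilon\, d r^2$ for all $i\neq j$. Cauchy--Schwarz then gives $\sum_{i\neq j}\sqrt{g_ig_j}\le(\sum_i\sqrt{g_i})^2\le k$, so the relative error in $\|y'_{\mathrm{route}}\|_{\mathrm{rms}}^2/r^2$ is at most $\epsilon(k-1)$. The approximation is therefore valid when the pairwise correlations are $o(1/k)$. I would state this as the precise sense of "approximately pairwise uncorrelated". It is a mild strengthening of the assumption in Proposition~\ref{prop:moe-classical-gating}, and I expect it to be the main point requiring care.
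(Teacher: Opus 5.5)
Your argument is correct and is essentially the paper's own: the paper gives no separate proof, and simply reuses the norm expansion behind Proposition~\ref{prop:moe-classical-gating} with $\sqrt{g_i}$ in place of $g_i$, then applies $\sum_i(\sqrt{g_i})^2=\sum_i g_i=1$. Your quantitative bound goes beyond the paper, with two small corrections: the sharp count is $\sum_{i\neq j}\sqrt{g_ig_j}=(\sum_i\sqrt{g_i})^2-1\le k-1$ (your written chain only gives $k$), and under uniform routing the classical branch differs from yours by the global factor $\sqrt{k}$, so the same $o(1/k)$ correlation requirement is already implicit in the classical $r/\sqrt{k}$ claim and is not really a strengthening.
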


We can see that classical gating is RMS-preserving only when Top-$k$ routing effectively collapses to Top-1, whereas SqrtGate is RMS-preserving for any gating distributions in the equal-RMS, weak-correlation regime induced by hypersphere optimization.  When the shared expert is presented, we also multiply $1/\sqrt{2}$ to the final output $y$ to preserve the overall output RMS after summation. 

\section{Experiments \& Results}\label{sec:experiments}

\paragraph{Architecture.} Throughout this work, we use the \texttt{Transformer-Next} architecture family, inspired by the attention module design in Qwen3-Next \cite{qwen2025qwen3next80ba3b}: dense Transformers with GQA (4 KV heads) \cite{ainslie2023gqa}, head dimension 128, aspect ratio $\alpha =128$ (\emph{i.e.} model width $w = 128d$), QK-Norm \cite{pmlr-v202-dehghani23a}, and headwise gated attention \cite{qiu2025gated}. The number of attention heads is set to $n_{\mathrm{head}} = 2d$, where $d$ is the model depth, so that $n_{\mathrm{head}} $ is always a multiple of 8 during scaling. We use SwiGLU \cite{shazeer2020glu} with $4w$ intermediate size  \cite{ren2025decoderhybriddecoder,openai2025gpt0oss0120b} for MLP, and apply Pre-Norm \cite{prenorm} for residual connections. The MoE module follows the same design as in \Cref{sec:moe-theory} with SqrtGate and a shared expert, where the Softmax operator is after Top-k selection \cite{openai2025gpt0oss0120b}, and we denote this architecture as \texttt{Transformer-Next-MoE}. We sweep depths $d \in \{8, 12, 16, 20, 24\}$, corresponding to 208M--3.8B parameters for the dense model and 913M--22.9B total parameters for MoE models with a sparsity of 8.
To match the active parameters to the dense model, we (1) choose Top-($k-1$) experts from an expert pool of $kS-1$ experts and have 1 shared expert always activated, and (2) shrink the intermediate dimension of the experts as we scale up the granularity.

\paragraph{Training setup.} By default, all models are trained on the SlimPajama dataset \cite{slimpajama} with a context length of 4K and a batch size of 2M tokens. The learning rate schedule uses a linear decay to 10\% of peak without warm-up, following \cite{modded_nanogpt_2024}. A momentum of 0.95 is adopted for both Muon and MuonH. For FLOPs scaling, the number of training tokens is scaled proportionally to number of parameters according to Chinchilla Law \cite{chinchilla} with a measure of Tokens Per Parameter (TPP) $= T / N$, where $T$ is the total training tokens and $N$ is the parameter count.  The PyTorch \cite{paszke2019pytorch} default initialization from Kaiming uniform distribution \cite{resnet} is adopted. The independent weight decay \cite{wortsman2024smallscale} is applied for the Muon optimizer.

\paragraph{Scaling comparison and compute efficiency leverage.}
We follow the Chinchilla law \cite{chinchilla} for fine-grained FLOPs computation, which accounts for embedding and language-model head FLOPs, as well as an accurate self-attention FLOPs calculation.
To compare scaling behaviors, we follow \cite{kaplan2020scaling, team2025every} to fit each method's final validation loss as a power law in training FLOPs, $C$, then define the compute efficiency leverage $\rho = C_{\mathrm{base}} / C^{*}$, where $C^{*}$ is the method's actual FLOPs and $C_{\mathrm{base}}$ is the FLOPs the baseline would need to reach the same loss $L^{*}$ of the method according to its fitted scaling law; $\rho > 1$ indicates better compute efficiency than the baseline.


\subsection{Empirical Optimality of MuonH}\label{sec:optimality}

A natural concern of hypersphere optimization is whether removing weight decay trades off performance. We compare MuonH against standard Muon at $d{=}8$ dense model with 10.4B tokens. For Muon, we jointly sweep learning rate $\eta \in \{4{\times}10^{-3}, 8{\times}10^{-3}, 10^{-2}, 2{\times}10^{-2}, 4{\times}10^{-2}\}$, and weight decay $\lambda \in \{4{\times}10^{-4}, 8{\times}10^{-4}, 10^{-3}, 2{\times}10^{-3}, 4{\times}10^{-3}\}$; for MuonH, weight decay is set to 0.

\begin{figure}[H]
    \centering
    \includegraphics[width=\linewidth]{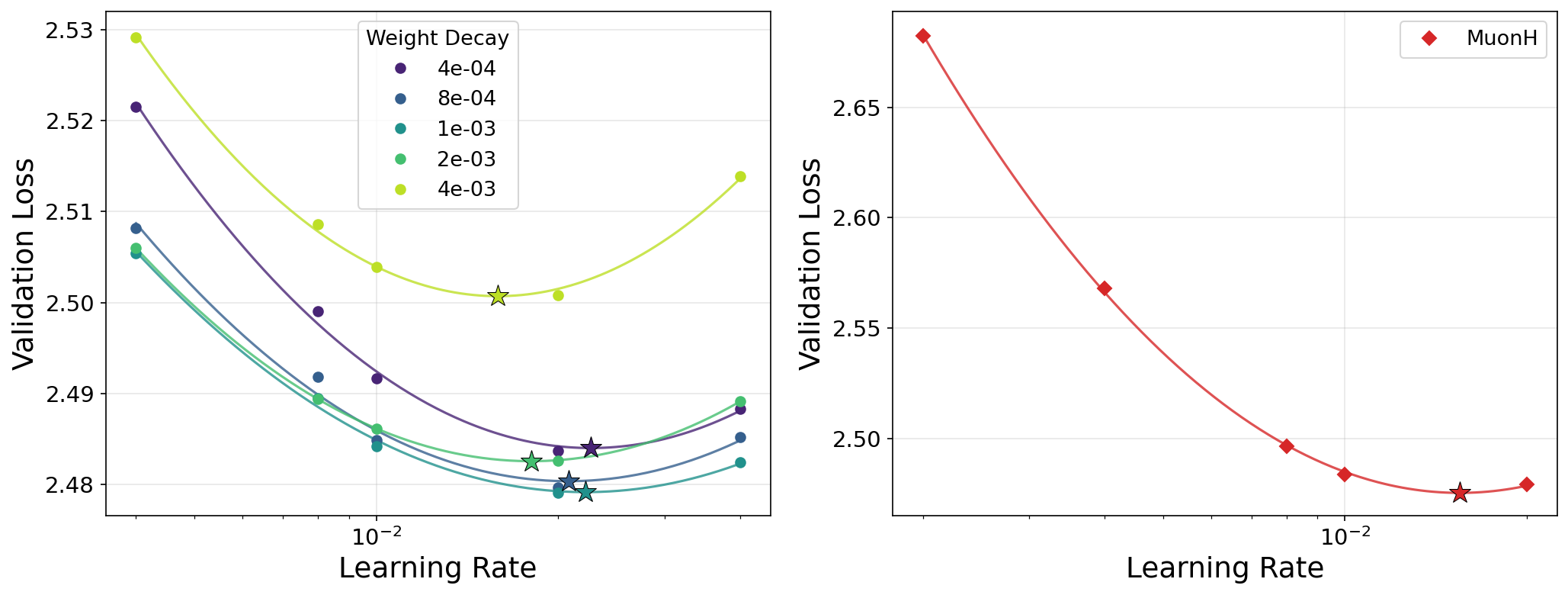}
    \caption{Validation loss vs.\ learning rate for Muon (sweeping weight decay $\lambda$) and MuonH ($\lambda{=}0$). MuonH achieves comparable optimality with a simpler hyperparameter space.}
    \label{fig:muonh-vs-muon}
\end{figure}

As shown in \Cref{fig:muonh-vs-muon} and \Cref{tab:muonh-vs-muon}, MuonH achieves a slightly better validation loss while entirely removing weight decay as a hyperparameter. The optimal LR for MuonH is ${\sim}1.4\times$ smaller than for Muon. Muon's performance is sensitive to weight decay: the best $\lambda = 10^{-3}$ gives a loss of 2.479, while $\lambda = 4{\times}10^{-3}$ gives 2.500 ($+0.021$ nats). These empirical results mean that MuonH does not trade-off quality for a simpler hyperparameter space and support our theory on the weight decay elimination effect of Frobenius-sphere optimization in \Cref{thm:fnorm_tangent_update}.
\vspace{-0.3cm}
\begin{table}[H]
\centering
\caption{Fitted optimal learning rate $\eta*$ and validation loss between MuonH and Muon. MuonH matches Muon while eliminating weight decay.}
\label{tab:muonh-vs-muon}
\vspace{0.1cm}
\begin{tabular}{lccc}
\toprule
\textbf{Method} & \textbf{Fitted $\eta^*$} & \textbf{Best Val Loss} & \textbf{Weight Decay} \\
\midrule
Muon (best $\lambda{=}10^{-3}$) & 0.0222 & 2.479 & $10^{-3}$ \\
MuonH ($\lambda{=}0$) & 0.0155 & 2.475 & 0 \\
\bottomrule
\end{tabular}
\end{table}

\subsection{Parameter Scaling}\label{sec:param-scaling}

We empirically verify the depth scaling predictions of \Cref{sec:depth-theory} by co-scaling width and depth at a fixed aspect ratio ($w = \alpha d$, $\alpha = 128$), so that the model is well-shaped across scales. We run all depth experiments at 10.4B tokens and sweep learning rates on a fine grid $\{0.002, 0.004, \ldots, 0.020\}$. Figure~\ref{fig:depth-mup} compares MuonH with and without Depth-$\mu$P at 50 TPP across $d \in \{8, 12, 16, 20, 24\}$ on the same LR grid, while full LR-loss sweeps are reported in \Cref{tab:depth-scaling,tab:depth-scaling-mup} and a summary of optimal values is provided in \Cref{tab:depth-mup}.

\begin{figure}[H]
    \centering
    \includegraphics[width=\linewidth]{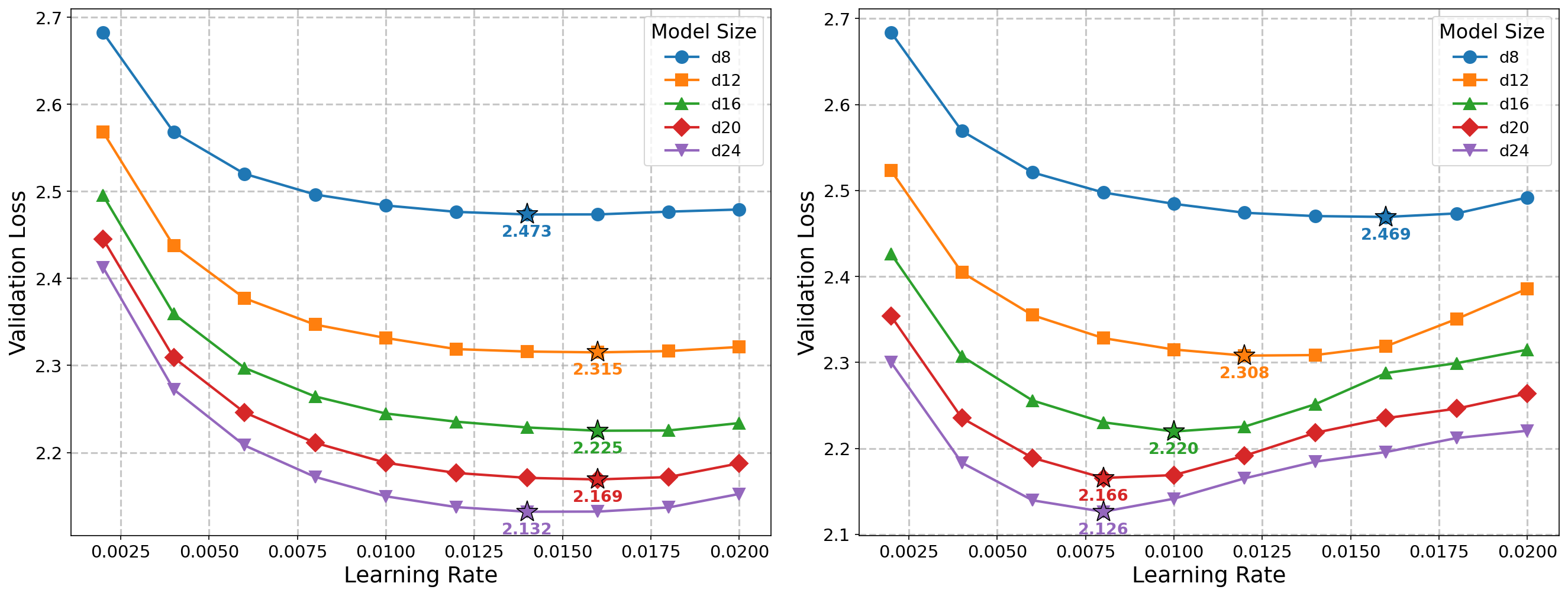}
    \caption{Loss vs.\ LR curves across model sizes with Depth-$\mu$P (left) and without Depth-$\mu$P (right). Depth-$\mu$P keeps the optimal LR stable at $\eta^* \approx 0.014$ -- $0.016$ across all depths, while the optimum drifts from $\eta^* = 0.016$ at $d{=}8$ to $\eta^* = 0.008$ at $d{=}24$ without Depth-$\mu$P.}
    \label{fig:depth-mup}
\end{figure}

Without Depth-$\mu$P, the optimal learning rate decreases from $\eta^* = 0.016$ at $d{=}8$ to $\eta^* = 0.008$ at $d{=}24$, consistent with the depth-dependent LR trend predicted in \Cref{sec:depth-theory}; the loss landscape also sharpens with depth, as increasing LR from the optimum to $\eta=0.020$ incurs a $+0.023$ nats penalty at $d{=}8$ (2.492 vs.\ 2.469) but a $+0.098$ nats penalty at $d{=}20$ (2.264 vs.\ 2.166). In contrast, with Depth-$\mu$P the optimal LR remains nearly constant at $\eta^* \approx 0.014$ -- $0.016$ from $d{=}8$ to $d{=}24$. Crucially, both configurations achieve comparable best losses at each depth (\Cref{tab:depth-mup}), confirming that Depth-$\mu$P preserves model quality while enabling hyperparameter transfer. These results empirically validate our theory in \Cref{sec:depth-theory} and refute the claim that MuonH is inherently depth-transferable \cite{wen2025hyperball_part1}.

\subsection{Critical Batch Size}\label{sec:bsz}

We fix $d{=}8$ with 10.4B tokens and sweep LR across batch sizes $B \in \{256\text{K}, 512\text{K}, 1\text{M}, 2\text{M}\}$ tokens to identify the critical batch size, the threshold above which increasing batch size significantly degrades the achievable loss \cite{mccandlish2018empirical}.

\begin{figure}[H]
    \centering
    \includegraphics[width=\linewidth]{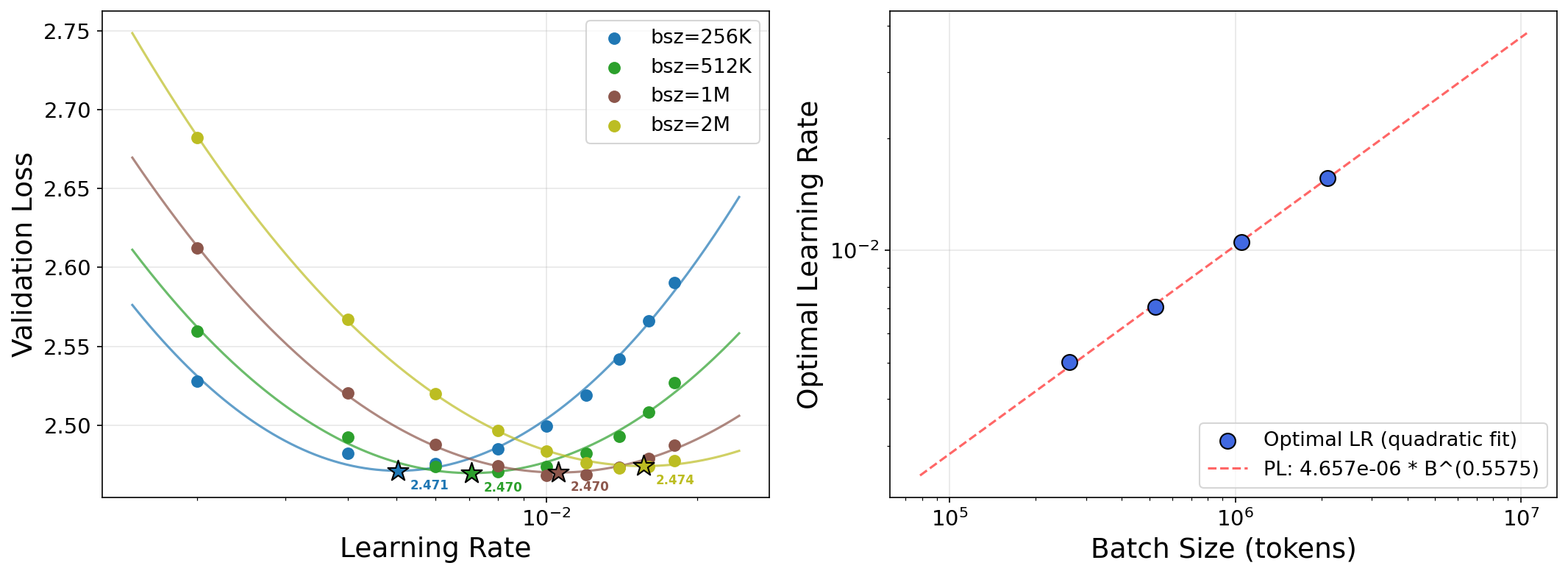}
    \caption{Left: Loss vs.\ LR at different batch sizes. Right: Optimal LR vs.\ batch size on log-log scale. The exact values are reported in \Cref{tab:bsz-scaling}.}
    \label{fig:bsz-scaling}
\end{figure}

The optimal LR scales as $\eta^* = 4.66 \times 10^{-6} \cdot B^{0.558}$, with exponent $\approx 0.56$ sitting between the linear scaling rule (exponent 1.0) and the square-root rule (exponent 0.5, predicted by SDE analysis \cite{malladi2022on}). The minimum achievable loss is remarkably stable across batch sizes (within a 0.001 difference across batch sizes from 256K to 1M and within 0.004 for 2M), indicating that all tested batch sizes are below the critical batch size for this configuration. Since the optimal loss is mostly invariant under the tested batch size, we fix the batch size to 2M tokens for all subsequent experiments so that the batch size does not confound the scaling behavior. We leave the study of the relationship between critical batch size and training tokens to future work, as it requires a straightforward but costly scaling of the same suite of experiments explored in this section across multiple token budgets.


\subsection{MoE Scaling}\label{sec:moe}

We extend our empirical verification of HyperP to our \texttt{Transformer-Next-MoE} architecture, investigating when scaling sparsity and granularity plateaus under optimal learning rates.

\paragraph{Auxiliary Balance Loss.}\label{sec:auxloss}
We apply the Switch-Transformer load balancing loss \cite{fedus2022switch}, computed over the global batch across all data-parallel ranks:
\begin{equation}\label{eq:aux-loss}
    \mathcal{L}_{\text{aux}} = \gamma \cdot N \cdot \sum_{i=1}^{N} f_i \cdot P_i,
\end{equation}
where $N$ is the number of experts, $f_i = c_i / \sum_{j=1}^{N} c_j$ is the fraction of tokens dispatched to expert~$i$ (with $c_i$ the hard count), and $P_i = \sum_{t=1}^{T} p_{t,i}\, /\, \sum_{j=1}^{N} \sum_{t=1}^{T} p_{t,j}$ is the normalized total router probability for expert~$i$, with $p_{t,i}$ the post-softmax routing weight for token~$t$. Both $f_i$ and $P_i$ are aggregated across all ranks via all-reduce before computing the loss. Under 10.4B training token budgets with $d=8$ model size, we sweep the auxiliary balance loss weight $\gamma \in \{10^{-3}, 10^{-2}, 10^{-1}\}$ for the $S=8, k=4$ MoE configuration,as shown in \Cref{fig:auxloss}. Note that in this setting, we train \emph{without} SqrtGate or shared experts to remove architectural confounders, allowing us to better isolate the effect of hypersphere optimization on load balancing.
\begin{figure}[H]
    \centering
    \includegraphics[width=0.5\linewidth]{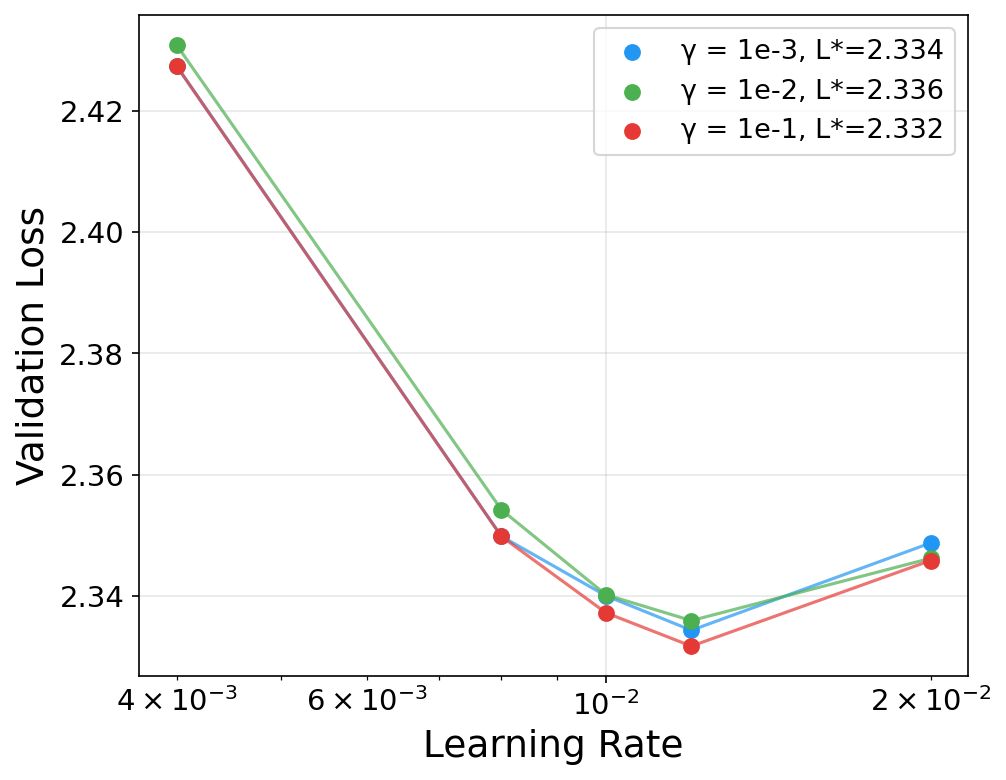}
    \caption{Loss vs.\ LR curves for three auxiliary loss weights. The curves nearly overlap, indicating robustness on $\gamma$ under hypersphere optimization. The exact values across all LR and $\gamma$ combinations are reported in \Cref{tab:auxloss}. }
    \label{fig:auxloss}
\end{figure}
To quantify load imbalance, we use the MaxVio (maximal violation) metric \cite{lin2024auxloss}, which measures how much the most loaded expert exceeds the balanced baseline:
\begin{equation}\label{eq:maxvio}
    \mathrm{MaxVio} = \frac{\max_{i} \, c_i - \bar{c}}{\bar{c}}, \qquad \bar{c} = \frac{1}{N}\sum_{i=1}^{N} c_i,
\end{equation}
where $c_i$ is the number of tokens dispatched to expert~$i$ within a batch and $\bar{c}$ is the expected load under perfect balance. A value of zero indicates perfectly uniform routing. We compute MaxVio per layer and report the mean across layers (Mean MaxVio).
Surprisingly, as shown in \Cref{tab:load-balance}, the largest weight $\gamma = 10^{-1}$ achieves the \emph{best} loss (2.332) with the lowest Mean MaxVio. This contrasts with prior work suggesting that the auxiliary loss harms language modeling quality and thereby motivates auxiliary-loss-free load balancing \cite{lin2024auxloss}. Under hypersphere optimization, the bounded logits (\Cref{prop:bounded}) likely prevent the auxiliary loss from interfering with the language modeling objective, and we leave the theoretical study to future work. Motivated by these results, we adopt $\gamma = 0.1$ for all experiments in the remainder of the paper.

\begin{table}[H]
\centering
\caption{Effect of $\gamma$ on global load balancing. Mean max violation measures worst-case load imbalance across experts at $\eta^* = 0.012$ from the global batch. } 
\label{tab:load-balance}
\vspace{0.1cm}
\begin{tabular}{ccc}
\toprule
$\gamma$ & \textbf{Best Val Loss} & \textbf{Mean MaxVio} \\
\midrule
$10^{-3}$ & 2.334 & 0.848 \\
$10^{-2}$ & 2.336 & 0.132 \\
$10^{-1}$ & 2.332 & 0.086 \\
\bottomrule
\end{tabular}
\end{table}


\paragraph{Sparsity Scaling.}\label{sec:sparsity}
We sweep sparsity $S \in \{1, 2, 4, 8, 16, 32\}$ for \texttt{Transformer-Next-MoE} with granularity $k=4$. The model has a constant 208M active parameters, with a range of 208M--3.33B total parameters depending on the sparsity. As shown in \Cref{fig:sparsity}, the optimal LR varies only mildly (0.012--0.016) across a $32\times$ sparsity range, indicating strong LR transferability over MoE sparsity. Increasing sparsity consistently improves validation loss: moving from $S{=}1$ to $S{=}32$ reduces the optimal loss by 0.224. This means that it would require approximately $5.2\times$ more dense training FLOP to achieve the same loss reduction under the MuonH+HyperP scaling curve in \Cref{fig:flops-scaling}.

\begin{figure}[H]
    \centering
    \includegraphics[width=0.95\linewidth]{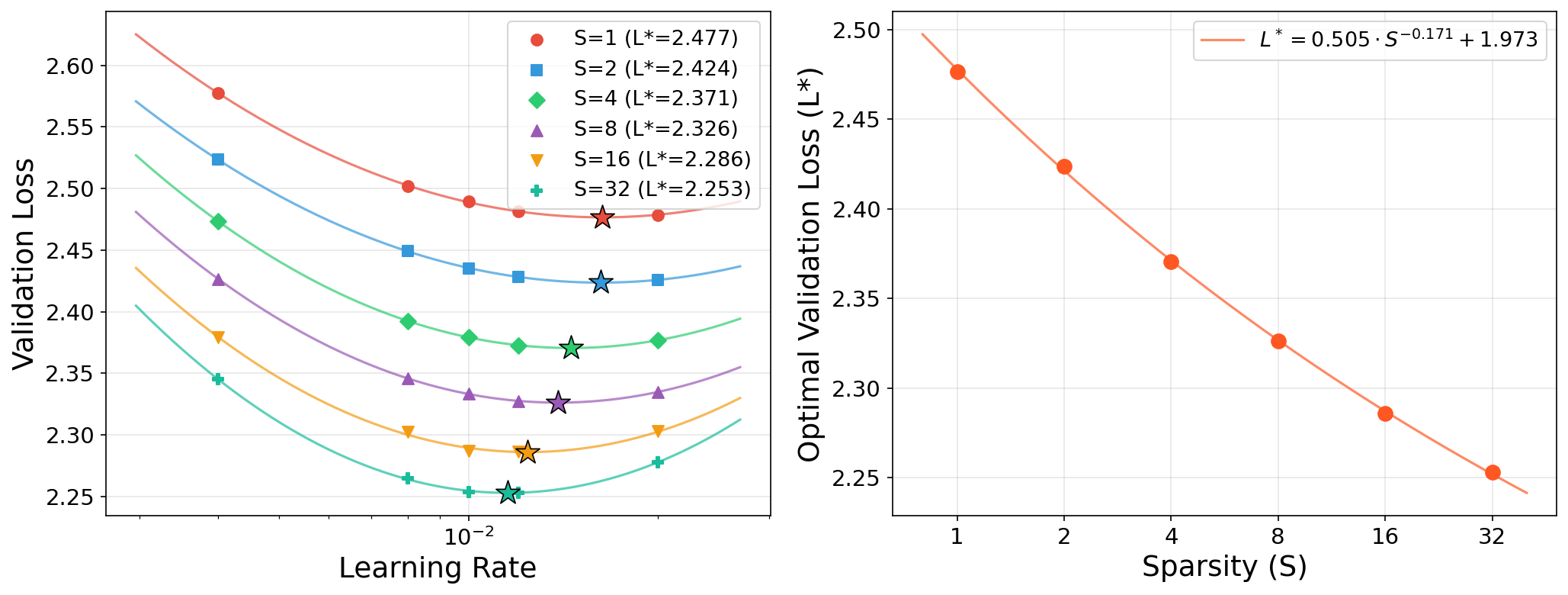}
    \caption{Left: Loss vs.\ LR across sparsity levels. Right: Optimal loss follows a power law in the MoE sparsity. The exact values are reported in \Cref{tab:sparsity}. }
    \label{fig:sparsity}
\end{figure}

\paragraph{Granularity Scaling.}\label{sec:topk}
We sweep $k \in \{2, 4, 8, 16, 32, 64\}$ with and without SqrtGate to verify that our granularity scaling theory in \Cref{sec:moe-theory} holds in practice. As shown in \Cref{fig:topk}, the optimal LR varies only between 0.012--0.014 across a $32\times$ range of $k$, enabling direct LR transfer across MoE granularity configurations. SqrtGate consistently improves val loss at every $k$, with the largest gain at $k = 2$ ($-0.018$ nats) and $k=32$ ($-0.009$ nats).  With SqrtGate, performance continues to improve up to $k=32$, which achieves the best loss of 2.310, whereas the baseline saturates at $k=16$. These results show that SqrtGate improves both the model quality and the granularity scalability compared to the baseline.

\begin{figure}[H]
    \centering
    \includegraphics[width=0.95\linewidth]{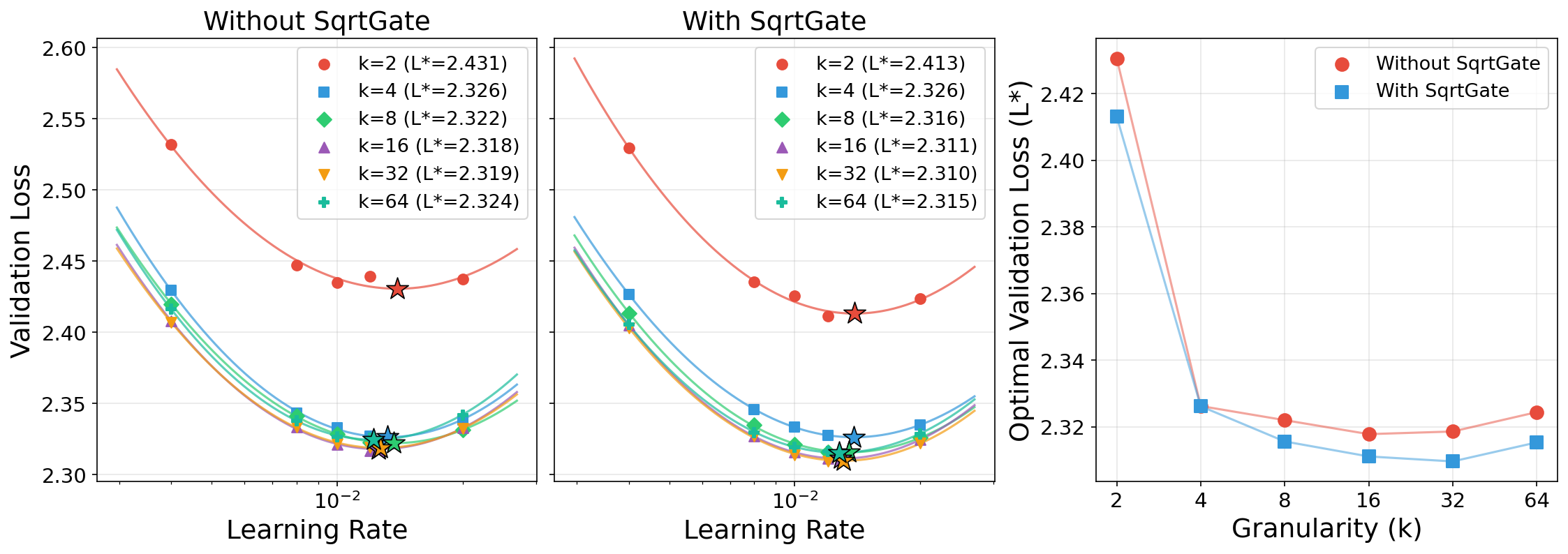}
    \caption{Loss vs.\ LR across top-$k$ values with and without SqrtGate.  The exact optimal learning rates and losses are provided in \Cref{tab:topk}.}
    \label{fig:topk}
\end{figure}

\subsection{Training FLOPs Scaling}\label{sec:flops}
\paragraph{Empirical optimality of HyperP across FLOPs.}
Before comparing scaling behaviors between different hyperparameter transfer laws, we first verify that HyperP preserves the empirical optimality of a single small-scale LR choice as training FLOPs increase. As shown in \Cref{fig:hyperp-transfer}, the loss-vs-LR curves under HyperP remain well aligned from $d{=}8$ through $d{=}20$, and the same base optimum $\eta_0 = 0.02$ is preserved across scales. This confirms the central premise of HyperP: one LR sweep at small scale is sufficient to determine the learning rates used along the full scaling trajectory. In contrast, without HyperP the optimal learning rate drifts with depth, so directly reusing the small-scale learning rate becomes increasingly miscalibrated and leads to substantially worse performance.

\begin{figure}[H]
    \centering
    \includegraphics[width=\linewidth]{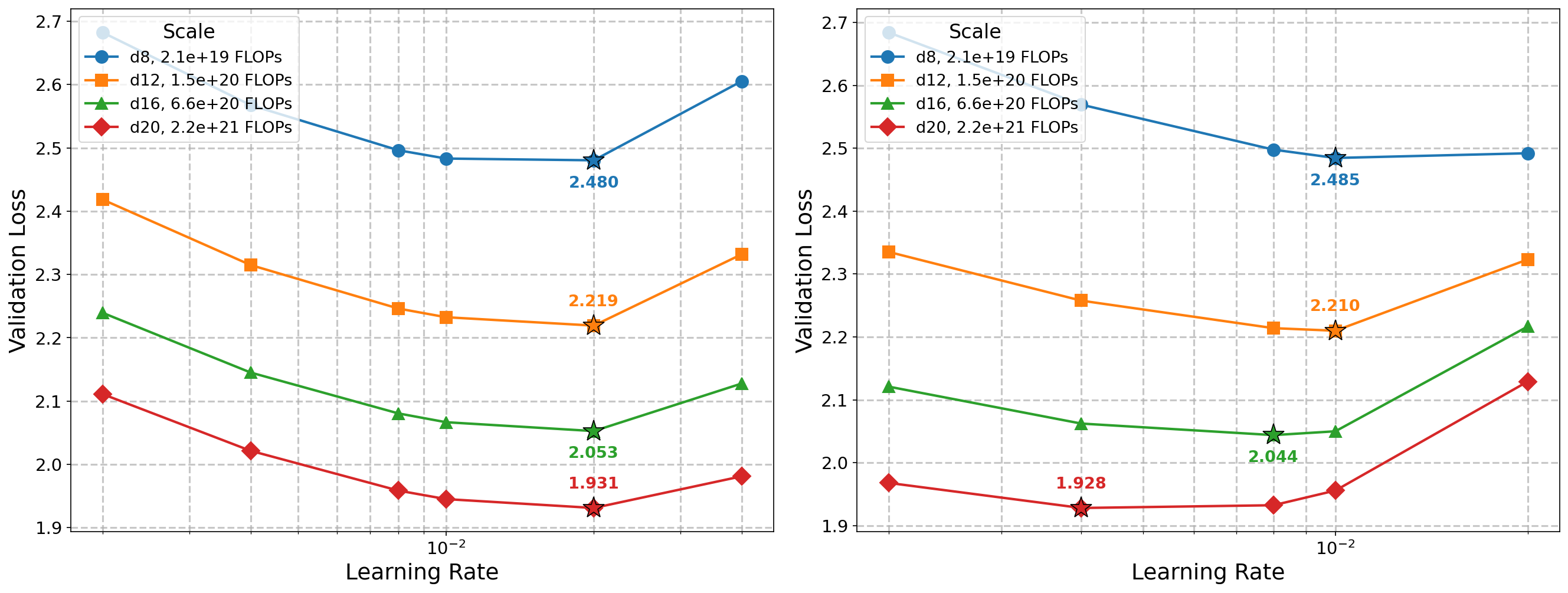}
    \caption{Loss vs.\ LR across depths with HyperP (left) and without HyperP (right). HyperP keeps the curves aligned and preserves a common base optimum at $\eta_0 = 0.02$ from $d{=}8$ through $d{=}20$.}
    \label{fig:hyperp-transfer}
\end{figure}
\begin{figure}[H]
    \centering
    \includegraphics[width=\linewidth]{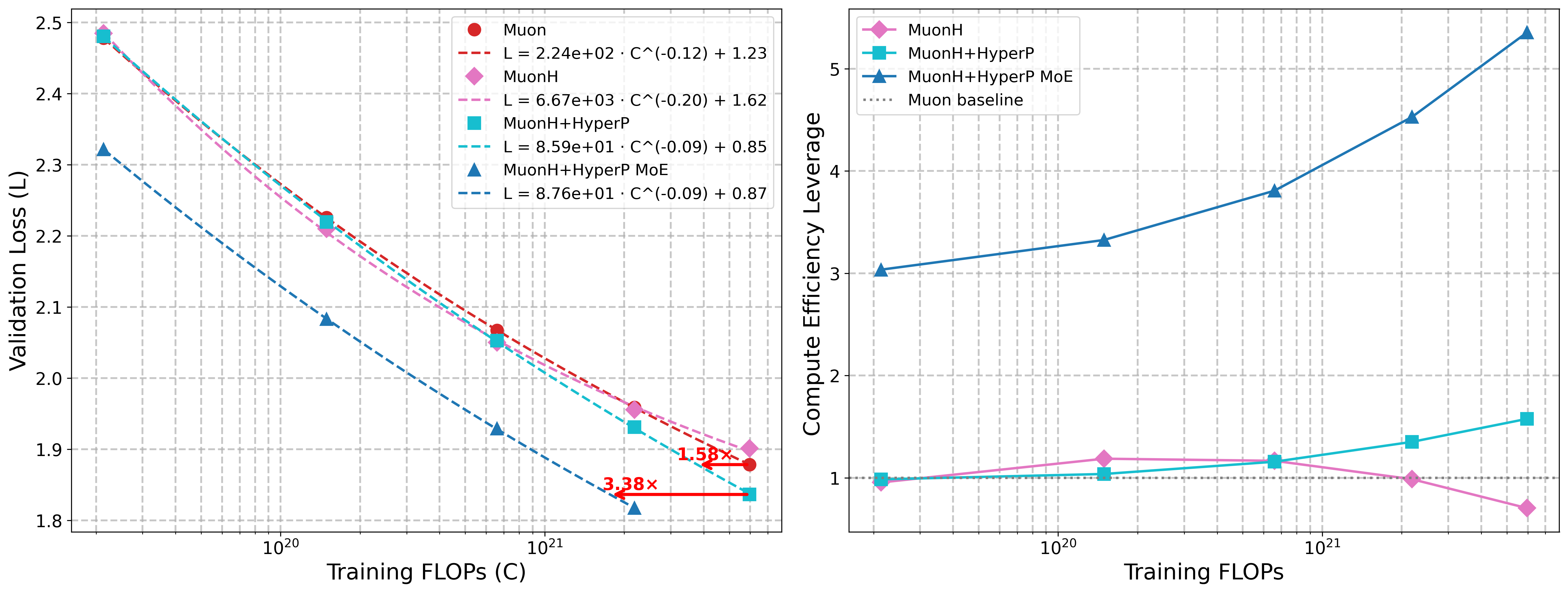}
    \caption{Left: Loss vs.\ FLOPs with power-law fits for all four methods. Right: Compute Efficiency Leverage (CEL) relative to the Muon baseline. MuonH+HyperP achieves 1.58$\times$ CEL than the MuonH baseline, while the MuonH+HyperP MoE models achieve 3.38$\times$ CEL over the dense model baselines. The exact values are reported in \Cref{tab:flops-scaling}.}
    \label{fig:flops-scaling}
\end{figure}

\paragraph{Compute scaling comparisons.} In \Cref{fig:flops-scaling}, we compare the end-to-end compute scaling behaviors of various hyperparameter transfer laws. Each method is tuned once at the smallest model size $d{=}8$ using a coarse-grained LR sweep $\eta \in \{2{\times}10^{-3}, 4{\times}10^{-3}, 8{\times}10^{-3}, 1{\times}10^{-2}, 2{\times}10^{-2},4{\times}10^{-2}\}$ and then scaled with the observed optimal learning rate across model sizes $d \in \{8,12,16,20,24\}$ with 50 TPP. Specifically, we compare four configurations:
\begin{itemize}[leftmargin=1.5em,itemsep=1pt]
    \item \textbf{Muon}: $\mu$P++ \cite{ren2025decoderhybriddecoder} with $\propto 1/w$ weight decay scaling \cite{chen2025how}, using optimal base LR $\eta^*=0.02$  with base weight decay $\lambda^*=10^{-3}$.
    \item \textbf{MuonH}: vanilla MuonH with $\propto 1/\sqrt{d_{in}}$ initialization \cite{wen2025hyperball_part1}, using optimal base LR $\eta^*=0.01$.
    \item \textbf{MuonH+HyperP}: MuonH with HyperP, using optimal base LR $\eta^*=0.02$.
    \item \textbf{MuonH+HyperP MoE}: HyperP applied to the \texttt{Transformer-Next-MoE} model with $S{=}8$ and $k{=}8$, using the optimal base LR $\eta^*=0.01$.
\end{itemize}

Given the empirical results, we fit each Loss--FLOPs trajectory with \(L = A \cdot C^{-b} + C_0\) (\Cref{fig:flops-scaling}, left). Among dense models, MuonH+HyperP exhibits the strongest scaling trends, achieving the lowest irreducible floor (\(C_0 = 0.85\)), compared with \(1.23\) for Muon and \(1.62\) for MuonH without HyperP. At the largest budget (\(5.96\times10^{21}\) FLOPs), MuonH+HyperP attains \(1.58\times\) Compute Efficiency Leverage (CEL) over the Muon baseline. The MuonH+HyperP MoE model achieves a similarly low floor (\(C_0 = 0.87\)) while outperforming all dense models across the full FLOPs range, reaching up to \(3.38\times\) CEL over the dense baselines at the largest budget. The CEL of MuonH+HyperP increases monotonically with scale (\Cref{fig:flops-scaling}, right), rising from near parity at \(d{=}8\) to \(1.58\times\) leverage at \(d{=}24\). In contrast, MuonH without HyperP is briefly competitive at intermediate scales but ultimately declines to \(0.70\times\), showing that even a modest learning-rate transfer mismatch compounds into a substantial compute efficiency penalty at large scale.

\section{Analysis}\label{sec:analysis}
\subsection{Transferable Stability}\label{sec:stability}

\begin{figure}[H]
    \centering
    \includegraphics[width=\linewidth]{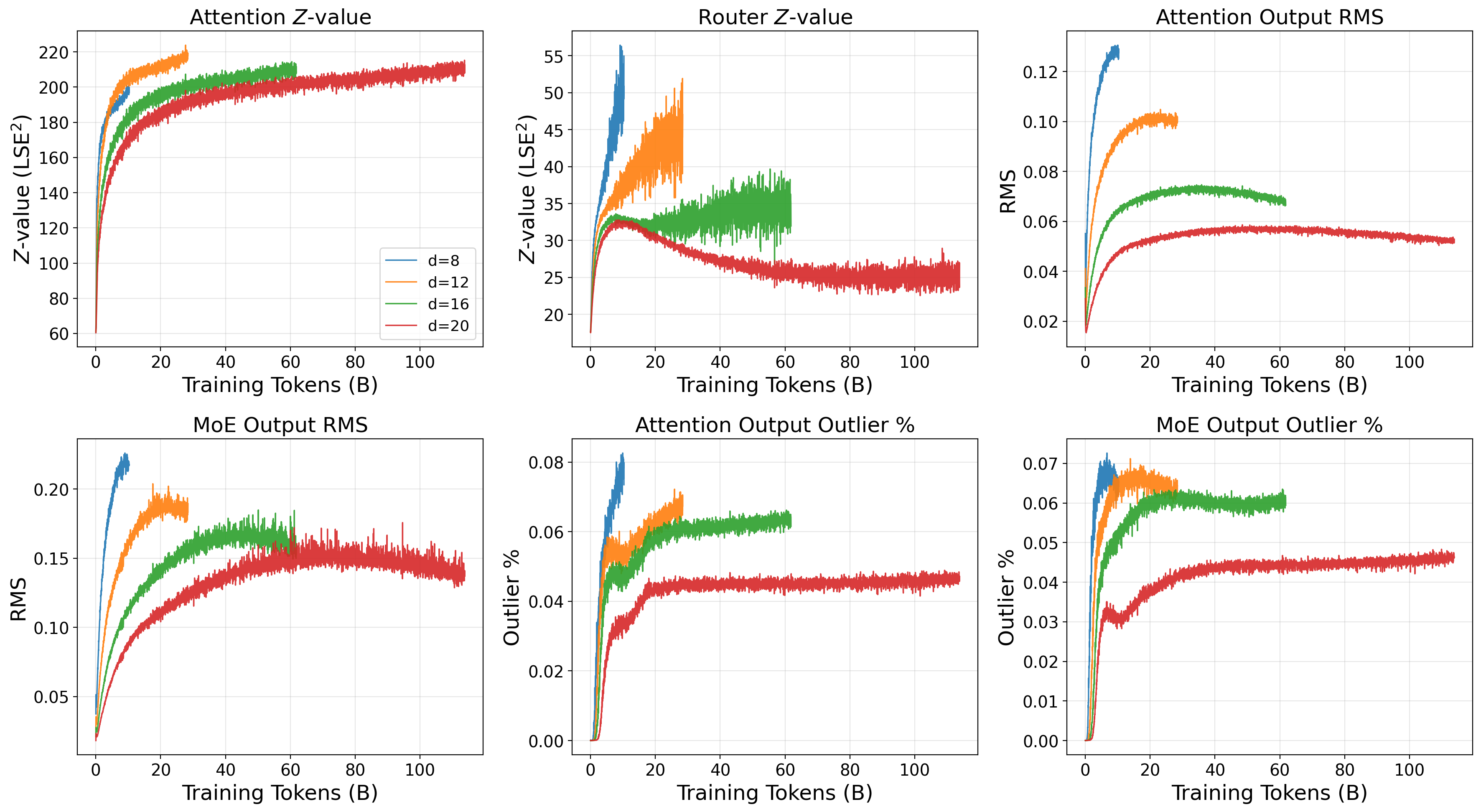}
    \caption{Stability metrics across training for MoE models at depths $d \in \{8, 12, 16, 20\}$ under the same transferred LR. All metrics are bounded and non-increasing with scales. 
    }
    \label{fig:stability}
\end{figure}

A practical concern with hyperparameter transfer is whether training stability degrades at larger scales when the hyperparameters are configured using a small proxy. We track six metrics of the training configuration \emph{MuonH+HyperP MoE} conducted in \Cref{sec:flops} for the \texttt{Transformer-Next-MoE} model across depths $d \in \{8, 12, 16, 20\}$:
\begin{itemize}[leftmargin=1.5em,itemsep=2pt]
    \item \textbf{$Z$-values} ($\mathrm{LSE}^2$): For both attention and MoE routing, we compute $Z = \frac{1}{BT}\sum \mathrm{LSE}(\mathbf{z})^2$, where $\mathrm{LSE}(\mathbf{z}) = \log\sum_i \exp(z_i)$ is the log-sum-exp of the pre-softmax logits. This is the quantity penalized by $Z$-loss \cite{zoph2022st0moe0}; large $Z$ indicates logit explosion.
    \item \textbf{Output RMS}: The root-mean-square magnitude of the attention and MoE residual-branch outputs, averaged across layers. Growing output norms signal representational instability.
    \item \textbf{Outlier \%}: The fraction of hidden-state elements of the attention and MoE residual-branch outputs exceeding $5\sigma$ from the per-token mean, averaged across layers. 
    This detects the emergence of activation outliers that degrade quantization.
\end{itemize}

\Cref{fig:stability} shows that all six metrics are well-behaved as scale increases. The attention $Z$-values plateau at comparable magnitudes across depths ($\approx$200--220). The router $Z$-values are even better behaved: their peaks decrease monotonically with depth (from 56 at $d{=}8$ to 33 at $d{=}20$) and continue to decline during training at the largest scale. 
The output RMS norms decrease with depth for both attention and MoE residual branches.
The outlier percentages similarly decrease with depth, indicating that larger models under HyperP do not develop more activation outliers commonly observed in standard training \cite{dettmers2022llm0int8000}. These results show that HyperP provides not only optimal LR transfer but also \emph{stability transfer}: the same hyperparameters that work well at small scales produce equally or more stable training dynamics at large scales.

\subsection{Sensitivity of Optimal Learning Rate Estimation}\label{sec:sensitivity}

Since we heavily use quadratic fitting to find optimal Learning Rates (LRs), we want to understand how many LR sweep points are needed for reliable estimates. We take the experimental data from our data scaling experiments in \Cref{sec:data} as a case study. We enumerate all $\binom{8}{k}$ combinations of $k$ points from 8 available LRs, fit the parabola in $\log(\eta)$ space, and measure the mean relative error compared to the full 8-point fit.

\begin{figure}[H]
    \centering
    \includegraphics[width=0.95\linewidth]{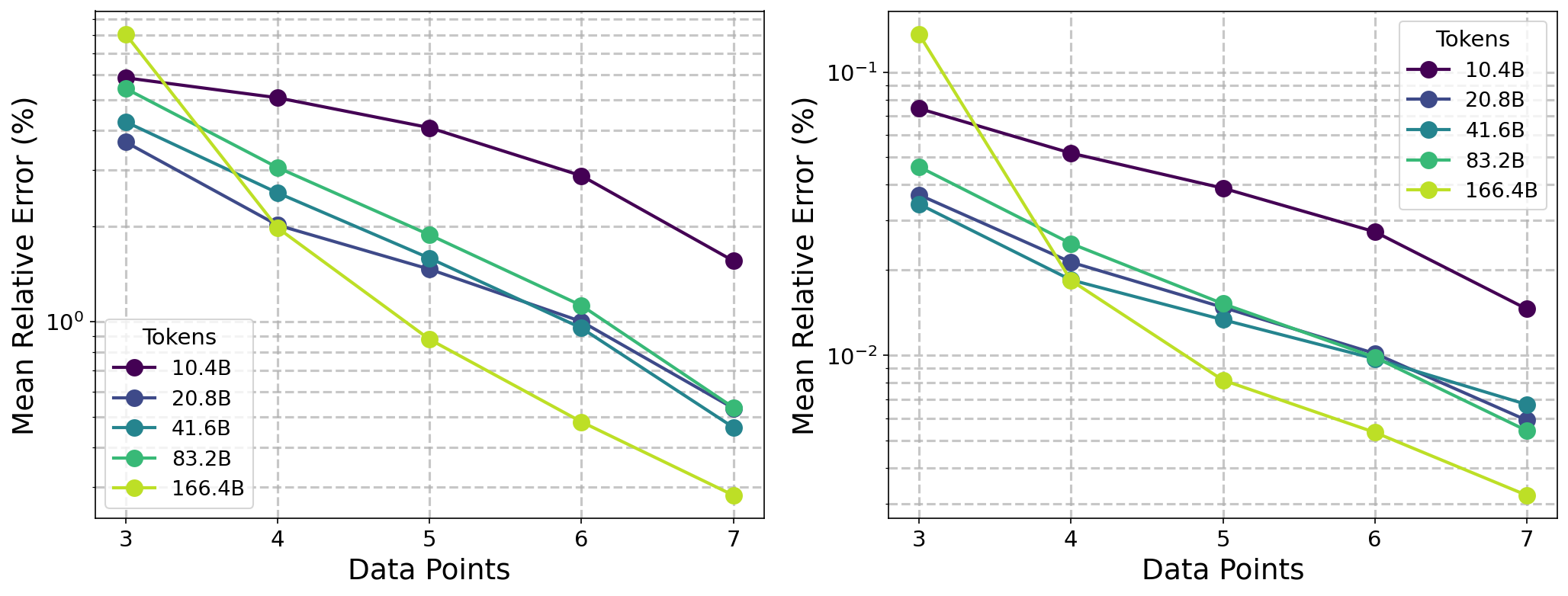}
    \caption{Relative error in optimal LR (Left) and optimal loss (Right) estimates vs.\ number of sweep points. The exact values are reported in \Cref{tab:sensitivity}.}
    \label{fig:sensitivity}
\end{figure}

\textbf{Optimal loss is far more stable than optimal LR.} The loss estimate is consistently $50$--$140\times$ less sensitive than the LR estimate (\Cref{tab:sensitivity}). With only $n{=}3$ points, the loss error is 0.03--0.14\% while the LR error is 3.7--8.1\%. This asymmetry is expected: the loss minimum is a second-order quantity that is insensitive to perturbations in the sweep points, whereas the minimizing LR is first-order.

\textbf{Five points suffice.} Because loss is second-order in LR, even moderate LR errors translate to negligible loss errors. With $n{=}5$, the worst-case LR error is 4.1\% (at 10.4B tokens), yet the corresponding loss error is only 0.04\%, or ${\sim}0.001$ nats in absolute terms. Throughout this paper, we report losses up-to four decimal places and the smallest architecture differences we act on are ${\sim}0.006$ nats (e.g., SqrtGate vs.\ SharedExp+SqrtGate in \Cref{tab:moe-ablation}). A ${\sim}0.001$-nat fitting uncertainty is thus well below the resolution needed to distinguish any comparison in our experiments. Note that 
our analysis assumes a well-fitting quadratic ($R^2 > 0.99$) and when the fit is poor, we base our conclusions on the observed optimal LR instead.


\subsection{Rethinking Architecture}\label{sec:arch-analysis}

After establishing the scaling laws on the predefined \texttt{Transformer-Next} architecture, we can now revisit architectural design choices to better understand their impact under hypersphere optimization. With HyperP enabling fair comparisons under scalable, near-optimal hyperparameter settings, we study several architecture variants by first identifying the optimal learning rate at the $d=8$ scale and then scaling to larger models using the fitted optima.

\paragraph{Small-scale ablation study.} We first compare architectural variants at the smallest scale ($d{=}8$, 10.4B tokens) before scaling up. On the dense model side, we ablate three attention normalization variants: \textbf{GA QK-Norm} (Gated Attention with QK normalization), \textbf{QK-Norm}, and \textbf{Baseline} (no QK-Norm or GA). On the MoE side, we ablate SqrtGate (\Cref{sec:moe-theory}) and the shared expert (SharedExp) on a sparsity of 8 and a granularity of 8 configuration ($S=8,k=8$), comparing \textbf{SqrtGate}, \textbf{SharedExp}, and \textbf{SharedExp + SqrtGate}.

\begin{figure}[H]
    \centering
    \includegraphics[width=\linewidth]{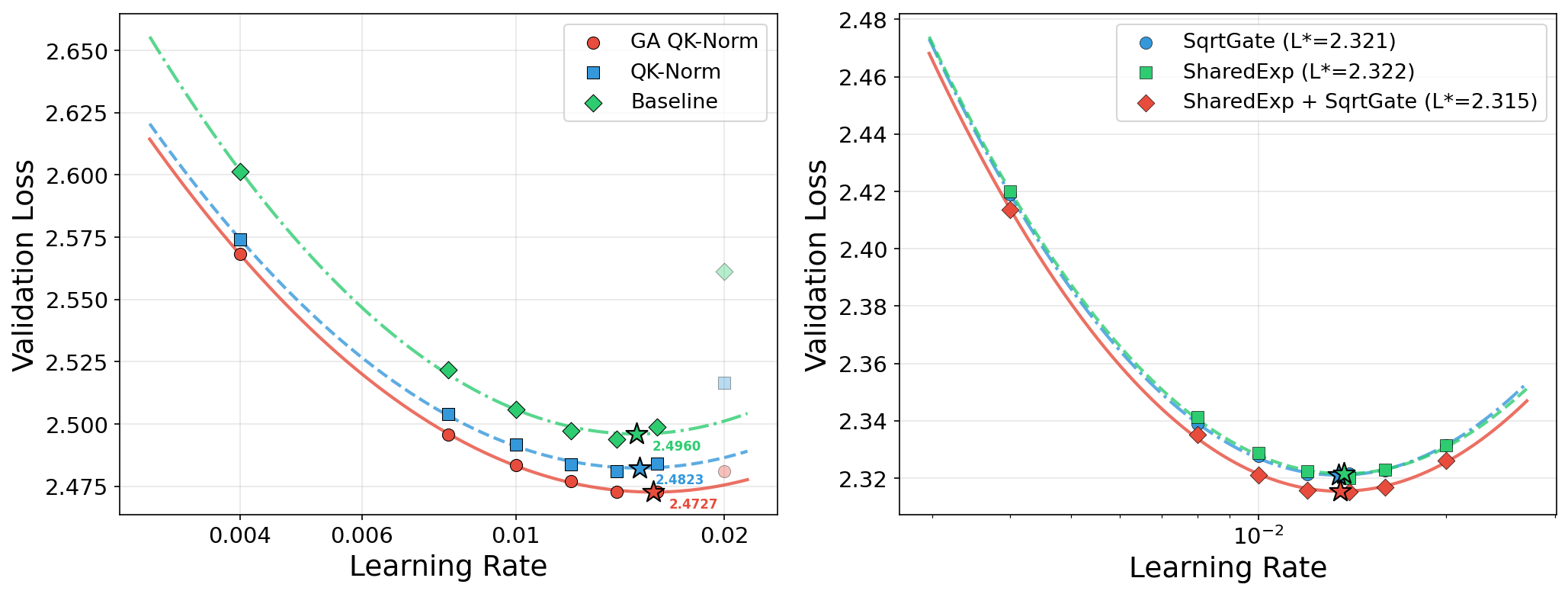}
    \caption{Small-scale LR sweeps at $d{=}8$, 10.4B tokens. Left: Dense attention normalization variants. GA QK-Norm achieves the lowest loss  with a slightly shifted optimal LR. We exclude the LR${=}0.02$ data points for dense models because the large learning rate leads to phase changes that harm fitting goodness. Right: MoE architecture variants. SharedExp + SqrtGate achieves the best loss  while all variants maintain similar optimal LRs ($\eta^* \approx 0.0135$ -- $0.0137$). The exact values are reported in \Cref{tab:qknorm} and \Cref{tab:moe-ablation}.}
    \label{fig:combined-lr-sweep}
\end{figure}

As shown in \Cref{fig:combined-lr-sweep}, GA QK-Norm outperforms QK-Norm by $-0.010$ nats and Baseline by $-0.023$ nats. All three methods have similar optimal LRs ($0.015$--$0.016$), confirming that gated attention with QK normalization directly improves optimization quality without drastically change the LR landscape.
 All three MoE variants share nearly identical optimal LRs ($\eta^* = 0.0135$ -- $0.0137$), indicating that neither SqrtGate nor the shared expert distorts the LR landscape under hypersphere optimization. SqrtGate and the shared expert alone  provide nearly identical improvements. Combining both yields the best performance, suggesting that the two mechanisms address orthogonal aspects: SqrtGate stabilizes the forward signal magnitude across routing granularity (\Cref{prop:moe-sqrt-gating}), while the shared expert provides a consistently activated capacity pathway. 






\begin{figure}[H]
    \centering
    \includegraphics[width=\linewidth]{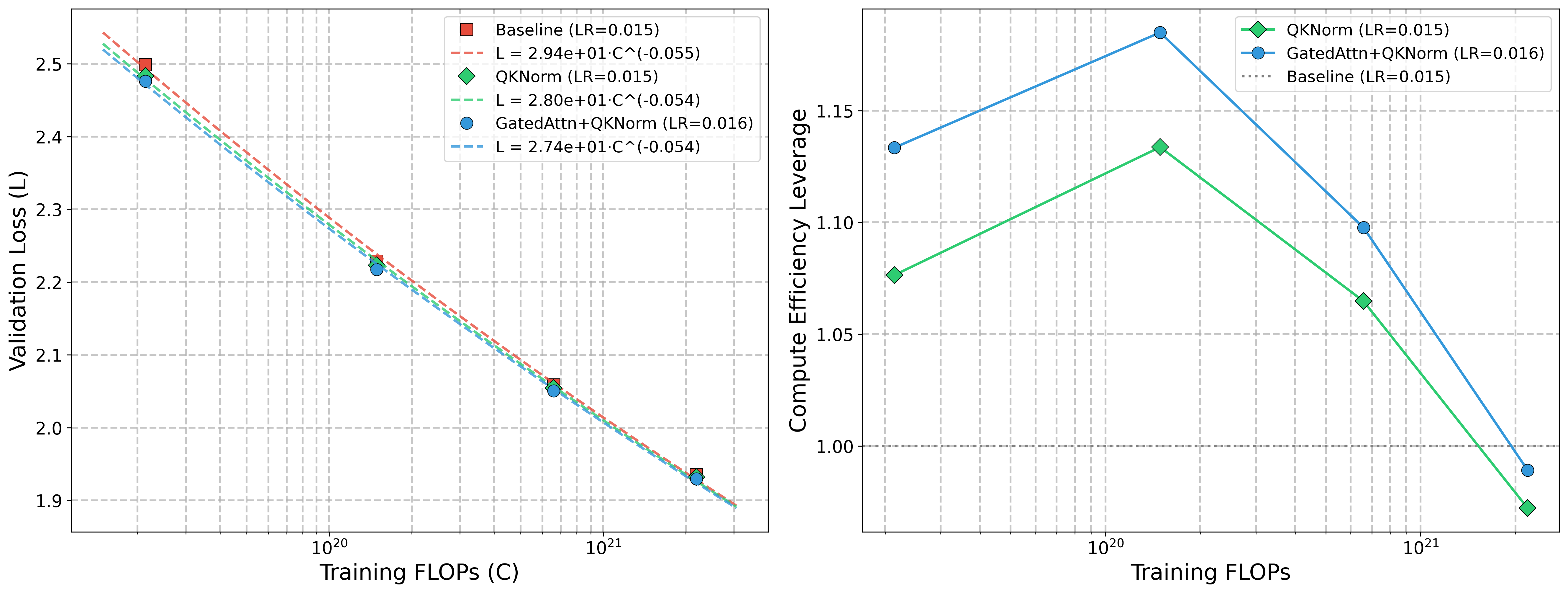}
    \caption{Dense architecture scaling. Left: Loss vs.\ FLOPs with power-law fits $L = A \cdot C^{-b}$. Right: Compute Efficiency Leverage (CEL) over the baseline.}
    \label{fig:ablation-scaling}
\end{figure}

\paragraph{Scaling comparisons.}
In \Cref{fig:ablation-scaling}, we compare Baseline (LR${=}0.015$), QKNorm (LR${=}0.015$), and GatedAttn+QKNorm (LR${=}0.016$) across depths $d \in \{8, 12, 16, 20\}$. Since there are fewer than 5 data points for the fitting, we apply power-law fits without irreducible loss for robust estimates. GatedAttn+QKNorm achieves the best scaling behaviors overall, translating its $-0.023$ nats small-scale advantage into growing compute efficiency leverage that peaks at ${\sim}1.15\times$ at intermediate scale. However, the advantages of both QKNorm and GatedAttn+QKNorm further shrink as the training scale increases, indicating the diminishing performance returns of these architectural choices.

\begin{figure}[H]
    \centering
    \includegraphics[width=\linewidth]{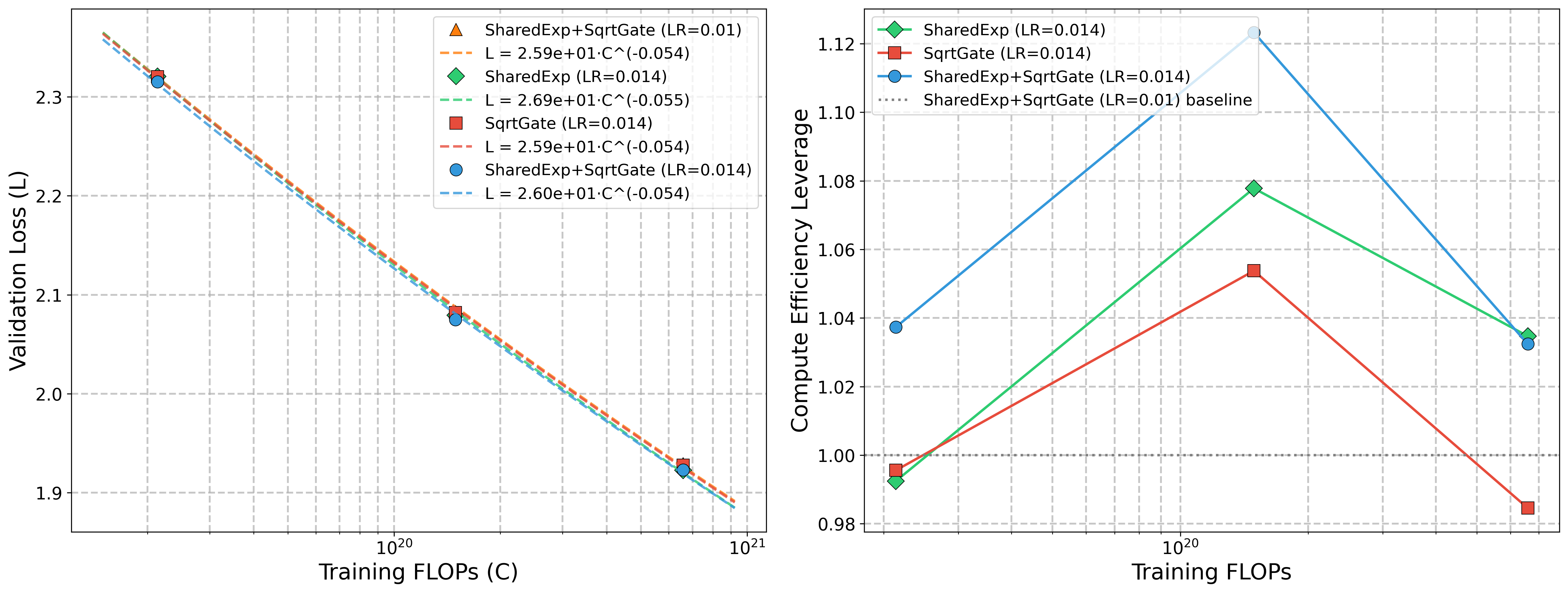}
    \caption{MoE architecture scaling. Left: Loss vs.\ FLOPs with power-law fits. All properly-tuned variants (LR${=}0.014$) outperform the lower-LR baseline. Right: Compute efficiency leverage over the SharedExp+SqrtGate (LR${=}0.01$) baseline. }
    \label{fig:moe-scaling}
\end{figure}

In \Cref{fig:moe-scaling},  we compare SharedExp, SqrtGate, and SharedExp+SqrtGate across depths $d \in \{8, 12, 16, 24\}$ with their fitted optimal LR${=}0.014$. We also have an observed optimal LR baseline for SharedExp+SqrtGate with LR${=}$0.01 to understand how using the fitted optimum affects the CEL when compared to the observed optimum.
All three variants at the properly tuned LR ($0.014$) outperform the SharedExp+SqrtGate baseline at the suboptimal LR ($0.01$), demonstrating that even a modest LR mismatch ($1.4\times$) compounds into meaningful efficiency losses at scale. Among the properly tuned variants, SharedExp+SqrtGate achieves the best overall scaling, confirming that the complementary benefits observed at small scale (\Cref{fig:combined-lr-sweep}) persist with increasing compute. These results underscore that fine-grained LR tuning at small scale, enabled by HyperP's transferable optimal LR, propagates nontrivial compute savings across the entire scaling trajectory.

\begin{figure}[H]
    \centering
    \includegraphics[width=\linewidth]{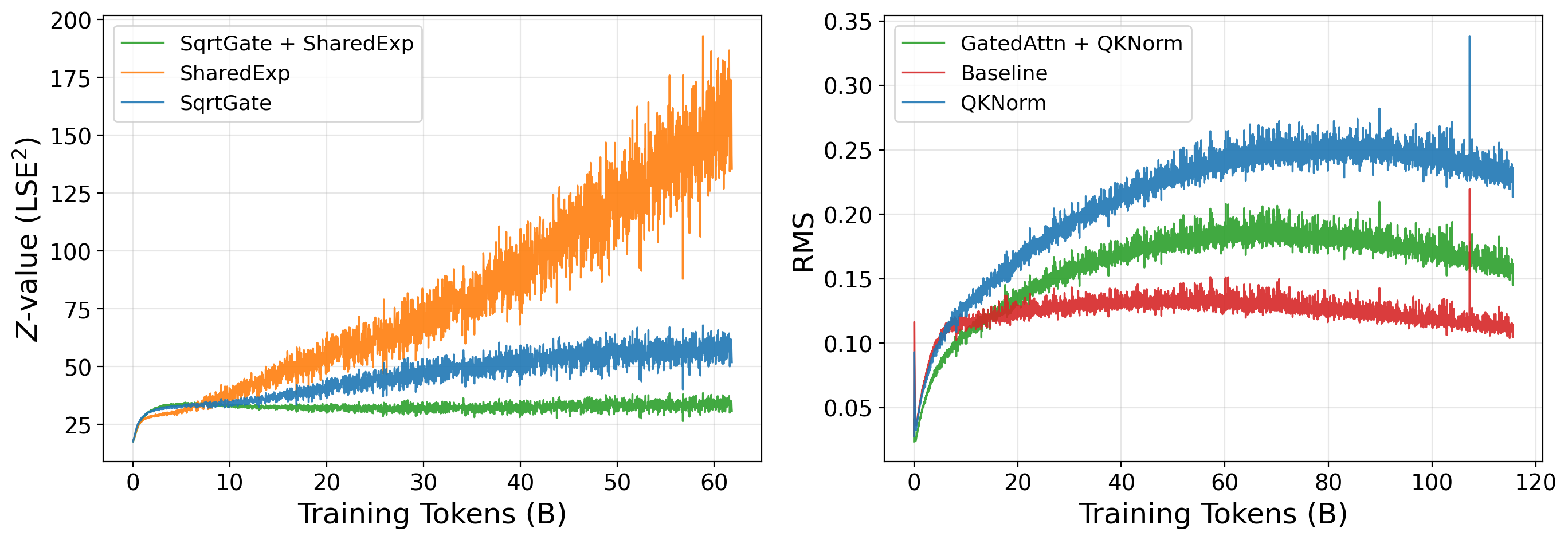}
    \caption{Stability comparison of architecture ablations. Left: Router $Z$-values for MoE variants at $d{=}16$. Router logits are exploding without SqrtGate. Right: MLP output RMS for dense variants at $d{=}20$. QKNorm and Baseline exhibit a large spike at around 110B training tokens, while GatedAttn+QKNorm maintains the most stable RMS throughout training.}
    \label{fig:ablation-stability}
\end{figure}
\paragraph{Stability benefits of architecture choices.}
While the loss improvements from architectural choices narrow with increasing computes, they present significant stability benefits at larger scales. \Cref{fig:ablation-stability} tracks two instability indicators during long training runs at the largest scale in ablation study. For MoE routing (\Cref{fig:ablation-stability}, left), the effect of SqrtGate is dramatic: without SqrtGate (SharedExp only), router $Z$-values grow continuously from ${\sim}25$ to over 190 with frequent spikes, indicating progressive logit explosion. Adding SqrtGate suppresses this entirely: the $Z$-values remain bounded below 40 throughout training, around $5{\times}$ reduction in peak magnitude. For dense architectures (\Cref{fig:ablation-stability}, right), QKNorm alone produces the highest MLP output RMS, while GatedAttn+QKNorm achieves the most stable trend without any spikes. The vanilla Baseline shows lower final RMS but exhibits late-training instability spikes absent from the GatedAttn+QKNorm variant. These results reveal a complementary role for architecture design under HyperP: even when loss differences shrink at scale, the stability margin provided by SqrtGate and GatedAttn becomes increasingly important for reliable large-scale training.

\section{Conclusion}\label{sec:conclusion}

We introduce HyperP (Hypersphere Parameterization), the first framework for transferring a single optimal learning rate across model width, depth, training tokens, and MoE granularity under Frobenius-sphere optimization. We prove that weight decay is a first-order no-op on the Frobenius sphere and that Depth-$\mu$P remains necessary, and empirically identify a data-scaling exponent of $0.32$ matching previous studies on AdamW, suggesting universality across optimizers. For MoE, we propose SqrtGate, a gating mechanism that preserves output RMS across granularities, reducing router $Z$-value peaks by $5\times$. A single base learning rate tuned at $d{=}8$ (208M active parameters) transfers to $d{=}24$ (3.8B active parameters), achieving $1.58\times$ compute efficiency leverage over a strong Muon baseline at $6\times 10^{21}$ FLOPs, with MoE models at $d{=}20$ of 13.3B total parameters reaching $3.38\times$ over dense models. The advantage grows monotonically, suggesting even larger gains than the Muon baseline at frontier scales. HyperP also enables substantially larger auxiliary load-balancing weights and allows models to achieve the best loss and expert balance simultaneously. HyperP further delivers \emph{transferable stability}: all monitored instability indicators are non-increasing with scale under the same transferred hyperparameters, and systematic architecture comparisons reveal that while loss improvements from QK-Norm, Gated Attention, and SqrtGate diminish with scale, their stability benefits become increasingly important for long-horizon training.

\textbf{Limitations.} We assume the Chinchilla law is compute-optimal for our training setup, which in practice needs to be re-fitted per training dataset. The magic data scaling exponent $0.32$ is an empirical observation that lacks a theoretical derivation for a universality guarantee. Extending these scaling laws to other architectures (e.g., hybrid models \cite{ren2025decoderhybriddecoder}, linear recurrent models \cite{lahoti2026mamba030,yang2025gated}) and verifying them at larger scale remains an important future direction. The batch size scaling exponent $0.56$ deviates from the SDE-predicted $0.5$, warranting further theoretical investigation. While our work mainly focuses on the transfer law of the learning rate, better scaling performance may be achieved by extending the analyses to the optimizers' momentum terms.

\section*{Acknowledgement}
We want to thank Kaiyue Wen, Cheng Lu, Songlin Yang and Jingyuan Liu for helpful discussions.

{\small

\bibliography{references}
\bibliographystyle{alpha}
}

\appendix

\section{Proof of Width Transfer under Frobenius Sphere}
\label{app:width-scaling-proof}
\begin{lemma}[Spectral--Frobenius sandwich]
\label{lem:spectral-frobenius-sandwich}
For any matrix $W \in \mathbb{R}^{d_{\mathrm{out}}\times d_{\mathrm{in}}}$,
\[
\|W\|_2 \le \|W\|_F \le \sqrt{r}\,\|W\|_2,
\qquad
r:=\operatorname{rank}(W).
\]
Hence,
\[
\|W\|_2 \le \|W\|_F \le
\sqrt{\min(d_{\mathrm{in}},d_{\mathrm{out}})}\,\|W\|_2.
\]
Moreover, the upper bound is attained if and only if
$r=\min(d_{\mathrm{in}},d_{\mathrm{out}})$ and all nonzero singular values of
$W$ are equal.
\end{lemma}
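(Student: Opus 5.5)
The plan is to work entirely through the singular value decomposition. Write $W = U\Sigma V^\top$ with nonzero singular values $\sigma_1 \ge \sigma_2 \ge \dots \ge \sigma_r > 0$, where $r = \operatorname{rank}(W)$. Two standard identities do all the work:
\[
\|W\|_2 = \sigma_1, \qquad \|W\|_F^2 = \operatorname{tr}(W^\top W) = \sum_{i=1}^r \sigma_i^2 .
\]
The second follows from unitary invariance of the trace under the orthogonal factors $U$ and $V$.

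For the lower bound, note that $\sigma_1^2 \le \sum_i \sigma_i^2$ because every term is nonnegative. Hence $\|W\|_2 \le \|W\|_F$; the case $W=0$ is trivial.

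For the upper bound, bound each term by the largest one: $\sum_{i=1}^r \sigma_i^2 \le r\sigma_1^2$, so $\|W\|_F \le \sqrt{r}\,\|W\|_2$. Since $r \le \min(d_{\mathrm{in}}, d_{\mathrm{out}})$ and $\sqrt{\cdot}$ is monotone, the second displayed chain follows.

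For the equality characterization, I take ``attained'' to mean the bound $\|W\|_F = \sqrt{\min(d_{\mathrm{in}},d_{\mathrm{out}})}\,\|W\|_2$ with $W \neq 0$. The case $W=0$ needs a remark: it gives trivial equality with $r=0$, so it must be excluded or noted separately. Write $m = \min(d_{\mathrm{in}},d_{\mathrm{out}})$. The chain
\[
\|W\|_F^2 = \sum_{i=1}^r \sigma_i^2 \le r\sigma_1^2 \le m\sigma_1^2
\]
is an equality if and only if both steps are equalities. The first step is tight exactly when $\sigma_i = \sigma_1$ for all $i \le r$. The second step is tight, given $\sigma_1 > 0$, exactly when $r = m$. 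Conversely, if $r = m$ and all nonzero singular values are equal, both inequalities become equalities.

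The only point I expect to need care is this zero-matrix edge case in the ``if and only if''. Otherwise the argument is routine.
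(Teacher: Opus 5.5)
Your proposal is correct and takes essentially the same route as the paper: you use the SVD identities $\|W\|_2=\sigma_1$ and $\|W\|_F^2=\sum_{i\le r}\sigma_i^2$, the termwise bound $\sum_i\sigma_i^2\le r\sigma_1^2$, and an equality analysis in which both equal nonzero singular values and full rank $r=\min(d_{\mathrm{in}},d_{\mathrm{out}})$ are required. Your explicit treatment of $W=0$ is a genuine refinement that the paper's proof skips: the zero matrix attains the $\sqrt{\min(d_{\mathrm{in}},d_{\mathrm{out}})}$ bound trivially with $r=0$, so the ``only if'' direction as stated actually requires $W\neq 0$, which is exactly the hypothesis $\sigma_1>0$ you invoke.
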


\begin{proof}
Let $\sigma_1,\dots,\sigma_r$ be the nonzero singular values of $W$. Then
\[
\|W\|_2 = \max_{1\le i\le r}\sigma_i,
\qquad
\|W\|_F = \Bigl(\sum_{i=1}^r \sigma_i^2\Bigr)^{1/2}.
\]
The lower bound $\|W\|_2 \le \|W\|_F$ is immediate. For the upper bound,
\[
\sum_{i=1}^r \sigma_i^2 \le r \max_i \sigma_i^2 = r\|W\|_2^2,
\]
hence
\[
\|W\|_F \le \sqrt{r}\,\|W\|_2.
\]
Equality holds if and only if all nonzero singular values are equal. To also
reach
\[
\|W\|_F = \sqrt{\min(d_{\mathrm{in}},d_{\mathrm{out}})}\,\|W\|_2,
\]
one additionally needs $r=\min(d_{\mathrm{in}},d_{\mathrm{out}})$, i.e.,
full rank.
\end{proof}

We now prove the width-transfer theorem.

\begin{proof}[Proof of Theorem~\ref{thm:width-scaling-hyperball}]
By assumption,
\[
\|W\|_{\mathrm{rms}} = \frac{\|W\|_F}{\sqrt{d_{\mathrm{out}}d_{\mathrm{in}}}}
= \frac{C}{\sqrt{d_{\mathrm{in}}}},
\]
which is equivalent to
\[
\|W\|_F = C\sqrt{d_{\mathrm{out}}}.
\]

Now assume $W$ is approximately isotropic on its input space, so that for
typical inputs $X$,
\[
\|WX\|_2  \le \|W\|_2 \|X\|_2 \approx \frac{\|W\|_F}{\sqrt{\min(d_{\mathrm{in}},d_{\mathrm{out}})}}\,\|X\|_2.
\]
Since $\sqrt{d_{\mathrm{in}}/ \min(d_{\mathrm{in}},d_{\mathrm{out}})} = O(1)$, then
\[
\|Y\|_{\mathrm{rms}}
=
\frac{\|Y\|_2}{\sqrt{d_{\mathrm{out}}}}
=
\frac{\|WX\|_2}{\sqrt{d_{\mathrm{out}}}}
\approx
\frac{\|W\|_F}{\sqrt{d_{\mathrm{in}}}\sqrt{d_{\mathrm{out}}}}\,\|X\|_2.
\]
Substituting $\|W\|_F=C\sqrt{d_{\mathrm{out}}}$ gives
\[
\|Y\|_{\mathrm{rms}}
\approx
\frac{C}{\sqrt{d_{\mathrm{in}}}}\|X\|_2.
\]
Finally, using $\|X\|_{\mathrm{rms}}=\|X\|_2/\sqrt{d_{\mathrm{in}}}$, we obtain
\[
\|Y\|_{\mathrm{rms}}
\approx
C\,\|X\|_{\mathrm{rms}}.
\]
Thus the output rms scale is width-stable, which is exactly the desired
$\mu$P-style width transfer.
\end{proof}

\section{First-Order Form of Frobenius-Sphere Updates}
\label{app:proof_fnorm_tangent_update}

We prove that Frobenius renormalization preserves only the tangent component of an update to first order.

\begin{proof}[Proof of Theorem~\ref{thm:fnorm_tangent_update}]
Vectorize $W$ and $\Delta$ as $w=\mathrm{vec}(W)$ and $\delta=\mathrm{vec}(\Delta)$. Since $\|w\|_2=c_W$, \eqref{eq:fnorm_projected_update_main} becomes
\begin{equation}
w^{+}
=
c_W \frac{w+\delta}{\|w+\delta\|_2}.
\end{equation}
Now expand the denominator:
\begin{align}
\|w+\delta\|_2
&=
\left(\|w\|_2^2 + 2\langle w,\delta\rangle + \|\delta\|_2^2\right)^{1/2} \\
&=
c_W
\left(
1 + 2\frac{\langle w,\delta\rangle}{c_W^2} + \frac{\|\delta\|_2^2}{c_W^2}
\right)^{1/2} \\
&=
c_W
\left(
1 + \frac{\langle w,\delta\rangle}{c_W^2}
\right)
+
O(\|\delta\|_2^2),
\end{align}
where we use $(1+z)^{1/2}=1+\frac12 z + O(z^2)$.

Therefore,
\begin{align}
w^{+}
&=
(w+\delta)
\left(
1 + \frac{\langle w,\delta\rangle}{c_W^2}
\right)^{-1}
+
O(\|\delta\|_2^2) \\
&=
(w+\delta)
\left(
1 - \frac{\langle w,\delta\rangle}{c_W^2}
\right)
+
O(\|\delta\|_2^2) \\
&=
w
+
\delta
-
\frac{\langle w,\delta\rangle}{c_W^2}w
+
O(\|\delta\|_2^2).
\end{align}
Hence
\begin{equation}
w^{+}-w
=
\delta
-
\frac{\langle w,\delta\rangle}{c_W^2}w
+
O(\|\delta\|_2^2).
\end{equation}
Returning to matrix form gives
\begin{equation}
W^{+}-W
=
\Delta
-
\frac{\langle \Delta,W\rangle_F}{\|W\|_F^2}W
+
O(\|\Delta\|_F^2)
=
\Pi_T(\Delta)+O(\|\Delta\|_F^2),
\end{equation}
which proves \eqref{eq:fnorm_tangent_expansion_main}.
\end{proof}

\begin{proof}[Proof of Corollary~\ref{cor:wd_noop}]
By linearity of $\Pi_T$,
\begin{equation}
\Pi_T(G+\lambda W)=\Pi_T(G)+\lambda \Pi_T(W).
\end{equation}
But
\begin{equation}
\Pi_T(W)
=
W-\frac{\langle W,W\rangle_F}{\|W\|_F^2}W
=
W-W=0.
\end{equation}
Applying Theorem~\ref{thm:fnorm_tangent_update} with $\Delta=-\eta(G+\lambda W)$ gives the result.
\end{proof}

\section{Depth Scaling under Frobenius-Sphere Optimization}
\label{app:proof_depth_scaling_fnorm}

We first derive the first-order decomposition of the network perturbation, then analyze residual networks with and without update normalization, and finally extend the argument to post-norm blocks by computing the LayerNorm Jacobian explicitly.

\subsection{First-order decomposition of the network perturbation}

Let
$
F(x;W_1,\dots,W_L)
$
denote the network output as a function of all layer parameters. For perturbations $\Delta W_1,\dots,\Delta W_L$, first-order multivariate Taylor expansion yields
\begin{equation}
F(x;W+\Delta W)-F(x;W)
=
\sum_{l=1}^L \frac{\partial F}{\partial W_l}\Delta W_l
+
O\!\left(\sum_{i,j}\|\Delta W_i\|_F \|\Delta W_j\|_F\right).
\label{eq:app_taylor_total}
\end{equation}
Thus the first-order total perturbation is additive over layers.

\subsection{Residual network without update normalization}

Consider
\begin{equation}
x_{l+1}=x_l+\alpha_L f_l(x_l;W_l).
\label{eq:app_prenorm_block}
\end{equation}
Perturbing both the hidden state and the weights gives, to first order,
\begin{equation}
\Delta x_{l+1}
=
\Delta x_l
+
\alpha_L \frac{\partial f_l}{\partial x_l}\Delta x_l
+
\alpha_L \frac{\partial f_l}{\partial W_l}\Delta W_l.
\label{eq:app_linearized_residual}
\end{equation}
Define
$
A_l = I + \alpha_L J_{f,x}^{(l)}$ and $b_l = \alpha_L J_{f,W}^{(l)}\Delta W_l.
$
Then $\Delta x_{l+1}=A_l \Delta x_l + b_l$.
Unrolling this recursion yields
\begin{equation}
\Delta x_L
=
\sum_{l=1}^{L}
\left(
\prod_{k=l+1}^{L-1} A_k
\right)b_l.
\label{eq:app_unrolled_sum}
\end{equation}
Since $\|J_{f,x}^{(l)}\|_{F}=O(1)$ with depth, for sufficiently small $\alpha_L$, each $A_l$ has an operator norm $O(1)$. Hence the downstream transport factors in \eqref{eq:app_unrolled_sum} contribute only constant-order factors at the level of depth exponents, and it suffices to track the scaling of $b_l$.

If only the weights are normalized and the weight updates scale with the magnitude of gradients, then by Theorem~\ref{thm:fnorm_tangent_update},
$\|\Delta W_l\|_F = O(\eta_l \|G_l\|_F)$.
Under the stable-depth assumption for residual networks, the layerwise gradient satisfies
$\|G_l\|_F = O(\alpha_L)$,
since differentiating \eqref{eq:app_prenorm_block} with respect to $W_l$ introduces exactly one factor of $\alpha_L$, while the upstream signal is $O(1)$ by assumption. Therefore
$\|\Delta W_l\|_F = O(\eta_l \alpha_L)$.
Since $\|J_{f,W}^{(l)}\|_F=O(1)$,
\begin{equation}
\|b_l\|_F
=
O(\alpha_L \|\Delta W_l\|_F)
=
O(\eta_l \alpha_L^2).
\end{equation}
By the Triangle Inequality, summing over $L$ layers gives
$\|\Delta x_L\|_F = O(L \eta_l \alpha_L^2)$.
We consider two cases where alpha is sufficiently small to satisfy our assumptions: If $\alpha_L=L^{-1/2}$, this reduces to $\|\Delta x_L\|_F = O(\eta_l)$,
so a depth-independent learning rate $\eta_l=O(1)$ yields an $O(1)$ first-order function perturbation; If $\alpha_L=L^{-1}$, one obtains
$\|\Delta x_L\|_F = O(L^{-1}\eta_l)$,
which requires $\eta_l=O(L)$.

\subsection{Residual network with update normalization}

Assume the raw update is normalized before the Frobenius-sphere projection:
\begin{equation}
\widehat G_l = c_G \frac{G_l}{\|G_l\|_F},
\qquad
\widetilde W_l = W_l - \eta_l \widehat G_l,
\qquad
W_l^{+} = c_W \frac{\widetilde W_l}{\|\widetilde W_l\|_F}.
\end{equation}
By Theorem~\ref{thm:fnorm_tangent_update},
\begin{equation}
\Delta W_l
=
W_l^{+}-W_l
=
-\eta_l \Pi_T(\widehat G_l) + O(\eta_l^2).
\end{equation}
Because \(\|\widehat G_l\|_F=c_G=O(1)\), we have
$
\|\Delta W_l\|_F = O(\eta_l).
$
Thus the linearized residual contribution at layer \(l\) scales as
\begin{equation}
\|b_l\|_F
=
O(\alpha_L \|\Delta W_l\|_F)
=
O(\alpha_L \eta_l),
\end{equation}
and summing over depth gives
\begin{equation}
\|\Delta x_L\|_F = O(L\alpha_L\eta_l).
\end{equation}
Therefore, preserving an \(O(1)\) first-order function perturbation requires
\begin{equation}
\eta_l = O\!\left(\frac{1}{L\alpha_L}\right).
\end{equation}
In particular, if $\alpha_L=L^{-1/2}$, this gives $\eta_l = O(L^{-1/2})$,
while if $\alpha_L=L^{-1}$, it gives $\eta_l = O(1)$.

\subsection{Post-norm residual block}

We now consider
\begin{equation}
x_{l+1}
=
\mathrm{LN}\!\left(x_l+\alpha_L f_l(x_l;W_l)\right).
\label{eq:app_postnorm_block}
\end{equation}
Let
$u = x + \alpha_L f(x;W)$, $\mu = \frac{1}{d}\mathbf 1^\top u$, $v = u-\mu \mathbf 1$, $\sigma = \sqrt{\frac{1}{d}\|v\|^2+\varepsilon}$.
Ignoring learned gain and bias, LayerNorm is $\mathrm{LN}(u)=v/\sigma$.
Let $P = I - \frac{1}{d}\mathbf 1 \mathbf 1^\top$. Since $v=Pu$, we have $dv = P\,du$. Moreover,
\begin{equation}
d\sigma
=
\frac{1}{2\sigma}d\!\left(\frac{1}{d}\|v\|^2+\varepsilon\right)
=
\frac{1}{\sigma d} v^\top dv
=
\frac{1}{\sigma d}v^\top du,
\end{equation}
where we use $Pv=v$. Now
\begin{equation}
d\,\mathrm{LN}(u)
=
d\left(\frac{v}{\sigma}\right)
=
\frac{1}{\sigma}dv - \frac{v}{\sigma^2}d\sigma
=
\frac{1}{\sigma}P\,du
-
\frac{1}{\sigma^3 d} vv^\top du.
\end{equation}
Hence the LayerNorm Jacobian is
\begin{equation}
J_{\mathrm{LN}}(u)
=
\frac{1}{\sigma}P - \frac{1}{\sigma^3 d}vv^\top
=
\frac{1}{\sigma}\left(P-\frac{1}{d\sigma^2}vv^\top\right).
\label{eq:app_ln_jacobian}
\end{equation}

Now differentiate \eqref{eq:app_postnorm_block}. By the chain rule,
\begin{equation}
\frac{\partial x_{l+1}}{\partial W_l}
=
J_{\mathrm{LN}}(u_l)\,\alpha_L \frac{\partial f_l}{\partial W_l},
\end{equation}
and similarly
$
\frac{\partial x_{l+1}}{\partial x_l}
=
J_{\mathrm{LN}}(u_l)\left(I+\alpha_L \frac{\partial f_l}{\partial x_l}\right).
$
Since $\sigma_l$ is $O(1)$ with depth under post-norm, then by \eqref{eq:app_ln_jacobian},
$\|J_{\mathrm{LN}}(u_l)\|_{\mathrm{op}} = O(1)$.
Therefore, the local weight sensitivity has the same depth scaler $\alpha_L$ as in the pre-norm residual block. Consequently, the same scaling argument as above applies: with update normalization, one obtains
$\|\Delta x_L\| = O(L\alpha_L \eta_l)$,
and hence $\eta_l = O\!\left(\frac{1}{L\alpha_L}\right)$.


\section{Detailed Experimental Results}
\label{app:detailed-tables}

This section provides the exact numerical values for the figures presented in \Cref{sec:data}, \Cref{sec:experiments} and \Cref{sec:analysis}.

\begin{table}[H]
\centering
\caption{Optimal LR vs.\ training token budget under fine-grid sweeping with quadratic fitting.}
\label{tab:data-scaling}
\vspace{0.1cm}
\begin{tabular}{ccc}
\toprule
\textbf{Training Tokens} & \textbf{Fitted $\eta^*$} & \textbf{Fitted Min Loss} \\
\midrule
10.4B  & 0.01515 & 2.4741 \\
20.8B  & 0.01208 & 2.4189 \\
41.6B  & 0.00958 & 2.3773 \\
83.2B  & 0.00772 & 2.3456 \\
166.4B & 0.00635 & 2.3214 \\
\bottomrule
\end{tabular}
\end{table}

\begin{table}[H]
\centering
\caption{Validation loss vs.\ LR across model depth at a fixed token budget of 10.4B without Depth-$\mu$P.}
\label{tab:depth-scaling}
\vspace{0.1cm}
\resizebox{\linewidth}{!}{
\begin{tabular}{ccccccccccccc}
\toprule
\textbf{Depth ($d$)} & \textbf{Params} & $\eta{=}0.002$ & $\eta{=}0.004$ & $\eta{=}0.006$ & $\eta{=}0.008$ & $\eta{=}0.010$ & $\eta{=}0.012$ & $\eta{=}0.014$ & $\eta{=}0.016$ & $\eta{=}0.018$ & $\eta{=}0.020$ & \textbf{Optimal} $\eta$ \\
\midrule
8  & 208M & 2.684 & 2.569 & 2.521 & 2.498 & 2.485 & 2.474 & 2.470 & \textbf{2.469} & 2.473 & 2.492 & 0.016 \\
12 & 570M & 2.523 & 2.405 & 2.355 & 2.328 & 2.315 & \textbf{2.308} & 2.309 & 2.319 & 2.351 & 2.386 & 0.012 \\
16 & 1.24B & 2.426 & 2.307 & 2.256 & 2.230 & \textbf{2.220} & 2.225 & 2.251 & 2.288 & 2.299 & 2.315 & 0.010 \\
20 & 2.31B & 2.354 & 2.235 & 2.189 & \textbf{2.166} & 2.169 & 2.191 & 2.218 & 2.235 & 2.246 & 2.264 & 0.008 \\
24 & 3.90B & 2.300 & 2.183 & 2.140 & \textbf{2.126} & 2.142 & 2.165 & 2.184 & 2.196 & 2.212 & 2.221 & 0.008 \\
\bottomrule
\end{tabular}
}
\end{table}

\begin{table}[H]
\centering
\caption{Validation loss vs.\ LR across model depth at a fixed token budget of 10.4B with Depth-$\mu$P.}
\label{tab:depth-scaling-mup}
\vspace{0.1cm}
\resizebox{\linewidth}{!}{
\begin{tabular}{ccccccccccccc}
\toprule
\textbf{Depth ($d$)} & \textbf{Params} & $\eta{=}0.002$ & $\eta{=}0.004$ & $\eta{=}0.006$ & $\eta{=}0.008$ & $\eta{=}0.010$ & $\eta{=}0.012$ & $\eta{=}0.014$ & $\eta{=}0.016$ & $\eta{=}0.018$ & $\eta{=}0.020$ & \textbf{Optimal} $\eta$ \\
\midrule
8  & 208M & 2.682 & 2.568 & 2.520 & 2.496 & 2.484 & 2.476 & \textbf{2.473} & 2.474 & 2.477 & 2.479 & 0.014 \\
12 & 570M & 2.568 & 2.437 & 2.377 & 2.347 & 2.331 & 2.319 & 2.316 & \textbf{2.315} & 2.317 & 2.321 & 0.016 \\
16 & 1.24B & 2.495 & 2.359 & 2.297 & 2.264 & 2.245 & 2.235 & 2.229 & \textbf{2.225} & 2.225 & 2.234 & 0.016 \\
20 & 2.31B & 2.445 & 2.309 & 2.246 & 2.211 & 2.188 & 2.177 & 2.171 & \textbf{2.169} & 2.172 & 2.188 & 0.016 \\
24 & 3.90B & 2.413 & 2.272 & 2.208 & 2.172 & 2.150 & 2.137 & \textbf{2.132} & 2.132 & 2.137 & 2.152 & 0.014 \\
\bottomrule
\end{tabular}
}
\end{table}

\begin{table}[H]
\centering
\caption{Optimal LR and loss vs.\ model depth at a fixed training token of 10.4B, comparing with and without Depth-$\mu$P. Depth-$\mu$P transfers the optimal LR across parameter size.}
\label{tab:depth-mup}
\vspace{0.1cm}
\begin{tabular}{ccccc}
\toprule
\textbf{Depth ($d$)} & \textbf{w/ Depth-$\mu$P $\eta^*$} & \textbf{w/ Depth-$\mu$P Loss} & \textbf{w/o Depth-$\mu$P $\eta^*$} & \textbf{w/o Depth-$\mu$P Loss} \\
\midrule
8  & 0.014 & 2.4734 & 0.016 & 2.4693 \\
12 & 0.016 & 2.3150 & 0.012 & 2.3079 \\
16 & 0.016 & 2.2250 & 0.010 & 2.2196 \\
20 & 0.016 & 2.1690 & 0.008 & 2.1656 \\
24 & 0.014 & 2.1320 & 0.008 & 2.1263 \\
\bottomrule
\end{tabular}
\end{table}

\begin{table}[H]
\centering
\caption{Optimal LR vs.\ batch size for dense models. The minimum loss is remarkably stable (within 0.004 nats), indicating all tested batch sizes are below the critical batch size.}
\label{tab:bsz-scaling}
\vspace{0.1cm}
\begin{tabular}{ccc}
\toprule
\textbf{Batch Size} & \textbf{Fitted $\eta^*$} & \textbf{Fitted Min Loss} \\
\midrule
256K & 0.00504 & 2.4711 \\
512K & 0.00706 & 2.4697 \\
1M   & 0.01056 & 2.4700 \\
2M   & 0.01562 & 2.4741 \\
\bottomrule
\end{tabular}
\end{table}

\begin{table}[H]
\centering
\caption{Validation loss vs.\ LR across auxiliary loss weights. The optimal LR ($\eta^* = 0.012$) and achievable loss are stable across a $100\times$ range of $\gamma$.}
\label{tab:auxloss}
\vspace{0.1cm}
\begin{tabular}{ccccccc}
\toprule
$\gamma$ & $\eta{=}0.004$ & $\eta{=}0.008$ & $\eta{=}0.01$ & $\eta{=}0.012$ & $\eta{=}0.02$ & \textbf{Best Loss} \\
\midrule
$10^{-3}$ & 2.427 & 2.350 & 2.340 & \textbf{2.334} & 2.349 & 2.334 \\
$10^{-2}$ & 2.431 & 2.354 & 2.340 & \textbf{2.336} & 2.346 & 2.336 \\
$10^{-1}$ & 2.427 & 2.350 & 2.337 & \textbf{2.332} & 2.346 & 2.332 \\
\bottomrule
\end{tabular}
\end{table}

\begin{table}[H]
\centering
\caption{Optimal LR and loss vs.\ MoE sparsity. The LR varies mildly (0.012--0.016) across a $32\times$ range.}
\label{tab:sparsity}
\vspace{0.1cm}
\begin{tabular}{ccc}
\toprule
\textbf{Sparsity ($S$)} & \textbf{Fitted $\eta^*$} & \textbf{Fitted Min Loss} \\
\midrule
1  & 0.0163 & 2.4766 \\
2  & 0.0162 & 2.4236 \\
4  & 0.0145 & 2.3705 \\
8  & 0.0139 & 2.3262 \\
16 & 0.0124 & 2.2861 \\
32 & 0.0115 & 2.2529 \\
\bottomrule
\end{tabular}
\end{table}

\begin{table}[H]
\centering
\caption{Optimal LR and loss across top-$k$ values, with and without SqrtGate.}
\label{tab:topk}
\vspace{0.1cm}
\begin{tabular}{ccccc}
\toprule
\textbf{Top-$k$} & \textbf{w/o SqrtGate $\eta^*$} & \textbf{w/o SqrtGate Loss} & \textbf{w/ SqrtGate $\eta^*$} & \textbf{w/ SqrtGate Loss} \\
\midrule
2  & 0.0140 & 2.4306 & 0.0139 & 2.4131 \\
4  & 0.0132 & 2.3263 & 0.0139 & 2.3262 \\
8  & 0.0137 & 2.3220 & 0.0135 & 2.3156 \\
16 & 0.0126 & 2.3178 & 0.0129 & 2.3111 \\
32 & 0.0127 & 2.3186 & 0.0131 & 2.3096 \\
64 & 0.0122 & 2.3244 & 0.0128 & 2.3154 \\
\bottomrule
\end{tabular}
\end{table}

\begin{table}[H]
\centering
\caption{Validation loss vs.\ FLOPs. MuonH + HyperP increasingly outperforms both alternatives at larger scale.}
\label{tab:flops-scaling}
\vspace{0.1cm}
\begin{tabular}{ccccc}
\toprule
\textbf{Depth} & \textbf{FLOPs} & \textbf{Muon} & \textbf{MuonH+HyperP} & \textbf{MuonH } \\
\midrule
8  & $2.14\times10^{19}$ & 2.4777 & 2.4804 & 2.4845 \\
12 & $1.49\times10^{20}$ & 2.2257 & 2.2192 & 2.2099 \\
16 & $6.59\times10^{20}$ & 2.0671 & 2.0526 & 2.0500 \\
20 & $2.19\times10^{21}$ & 1.9591 & 1.9311 & 1.9558 \\
24 & $5.96\times10^{21}$ & 1.8785 & 1.8365 & 1.9015 \\
\bottomrule
\end{tabular}
\end{table}

\begin{table}[H]
\centering
\caption{Compute efficiency leverage over Muon. MuonH + HyperP's advantage grows monotonically with scale.}
\label{tab:leverage}
\vspace{0.1cm}
\begin{tabular}{cccc}
\toprule
\textbf{Depth} & \textbf{FLOPs} & \textbf{MuonH+HyperP} & \textbf{MuonH } \\
\midrule
8  & $2.14\times10^{19}$ & $0.99\times$ & $0.96\times$ \\
12 & $1.49\times10^{20}$ & $1.04\times$ & $1.19\times$ \\
16 & $6.59\times10^{20}$ & $1.16\times$ & $1.17\times$ \\
20 & $2.19\times10^{21}$ & $1.35\times$ & $0.99\times$ \\
24 & $5.96\times10^{21}$ & $1.58\times$ & $0.70\times$ \\
\bottomrule
\end{tabular}
\end{table}

\begin{table}[H]
\centering
\caption{Mean relative error (\%) of optimal LR and optimal loss estimates as a function of the number of sweep points $n$.}
\label{tab:sensitivity}
\vspace{0.1cm}
\begin{tabular}{cccccccccccc}
\toprule
 & \multicolumn{5}{c}{\textbf{LR Relative Error (\%)}} & \multicolumn{5}{c}{\textbf{Loss Relative Error (\%)}} \\
\cmidrule(lr){2-6} \cmidrule(lr){7-11}
\textbf{Tokens} & $n{=}3$ & $n{=}4$ & $n{=}5$ & $n{=}6$ & $n{=}7$ & $n{=}3$ & $n{=}4$ & $n{=}5$ & $n{=}6$ & $n{=}7$ \\
\midrule
10.4B  & 5.87 & 5.08 & 4.09 & 2.88 & 1.55 & 0.07 & 0.05 & 0.04 & 0.03 & 0.01 \\
20.8B  & 3.68 & 2.01 & 1.46 & 1.00 & 0.53 & 0.04 & 0.02 & 0.01 & 0.01 & 0.01 \\
41.6B  & 4.27 & 2.55 & 1.58 & 0.95 & 0.46 & 0.03 & 0.02 & 0.01 & 0.01 & 0.01 \\
83.2B  & 5.43 & 3.06 & 1.88 & 1.12 & 0.54 & 0.05 & 0.02 & 0.02 & 0.01 & 0.01 \\
166.4B & 8.07 & 1.97 & 0.88 & 0.48 & 0.28 & 0.14 & 0.02 & 0.01 & 0.01 & 0.00 \\
\bottomrule
\end{tabular}
\end{table}

\begin{table}[H]
\centering
\caption{QK-Norm ablation at $d{=}8$. GA QK-Norm achieves the best loss while maintaining a similar optimal LR.}
\label{tab:qknorm}
\vspace{0.1cm}
\begin{tabular}{lcc}
\toprule
\textbf{Method} & \textbf{Fitted $\eta^*$} & \textbf{Min Loss } \\
\midrule
GA QK-Norm & 0.0158 & \textbf{2.4727} \\
QK-Norm    & 0.0151 & 2.4823 \\
Baseline   & 0.0149 & 2.4960 \\
\bottomrule
\end{tabular}
\end{table}

\begin{table}[H]
\centering
\caption{MoE architecture ablation at $d{=}8$, $S{=}8$, $k{=}8$, 10.4B tokens. SqrtGate and the shared expert provide complementary gains.}
\label{tab:moe-ablation}
\vspace{0.1cm}
\begin{tabular}{lccc}
\toprule
\textbf{Method} & \textbf{Fitted $\eta^*$} & \textbf{Min Loss} & \textbf{$\Delta$ vs.\ Best} \\
\midrule
SharedExp + SqrtGate & 0.0135 & \textbf{2.3154} & --- \\
SqrtGate             & 0.0135 & 2.3210 & $+0.006$ \\
SharedExp            & 0.0137 & 2.3215 & $+0.006$ \\
\bottomrule
\end{tabular}
\end{table}

\end{document}